\def\isarxiv{1} 
\definecolor{mydarkblue}{rgb}{0,0.08,0.45}
\theoremstyle{plain}
\newtheorem{theorem}{Theorem}[section]
\newtheorem{lemma}[theorem]{Lemma}
\newtheorem{definition}[theorem]{Definition}
\newtheorem{proposition}[theorem]{Proposition}
\newtheorem{corollary}[theorem]{Corollary}
\newtheorem{fact}[theorem]{Fact}
\newtheorem{remark}[theorem]{Remark}
\newcommand{\R}{\mathbb{R}}
\newcommand{\dlogtime}{\mathsf{DLOGTIME}}
\newcommand{\F}{\mathbb{F}}
\newcommand{\AC}{\mathsf{AC}}
\newcommand{\NC}{\mathsf{NC}}
\newcommand{\TC}{\mathsf{TC}}
\newcommand{\rope}{\mathsf{RoPE}}
\DeclareMathOperator{\poly}{poly}
\DeclareMathOperator{\diag}{diag}
\newcommand*{\RN}[1]{\expandafter\@slowromancap\romannumeral #1@}
\begin{document}

\ifdefined\isarxiv

\date{}

\title{Theoretical Constraints on the Expressive Power of $\mathsf{RoPE}$-based Tensor Attention Transformers}
\author{
Xiaoyu Li\thanks{\texttt{
7.xiaoyu.li@gmail.com}. Independent Researcher.}
\and
Yingyu Liang\thanks{\texttt{
yingyul@hku.hk}. The University of Hong Kong. \texttt{
yliang@cs.wisc.edu}. University of Wisconsin-Madison.} 
\and
Zhenmei Shi\thanks{\texttt{
zhmeishi@cs.wisc.edu}. University of Wisconsin-Madison.}
\and 
Zhao Song\thanks{\texttt{ magic.linuxkde@gmail.com}. The Simons Institute for the Theory of Computing at UC Berkeley.}
\and
Mingda Wan\thanks{\texttt{
dylan.r.mathison@gmail.com}. Anhui University.}
}

\else

\title{Theoretical Constraints on the Expressive Power of $\mathsf{RoPE}$-based Tensor Attention Transformers} 
\maketitle 
\fi

\ifdefined\isarxiv
\begin{titlepage}
  \maketitle
  \begin{abstract}
Tensor Attention extends traditional attention mechanisms by capturing high-order correlations across multiple modalities, addressing the limitations of classical matrix-based attention. Meanwhile, Rotary Position Embedding ($\mathsf{RoPE}$) has shown superior performance in encoding positional information in long-context scenarios, significantly enhancing transformer models' expressiveness. Despite these empirical successes, the theoretical limitations of these technologies remain underexplored. In this study, we analyze the circuit complexity of Tensor Attention and $\mathsf{RoPE}$-based Tensor Attention, showing that with polynomial precision, constant-depth layers, and linear or sublinear hidden dimension, they cannot solve fixed membership problems or $(A_{F,r})^*$ closure problems, under the assumption that $\mathsf{TC}^0 \neq \mathsf{NC}^1$. These findings highlight a gap between the empirical performance and theoretical constraints of Tensor Attention and $\mathsf{RoPE}$-based Tensor Attention Transformers, offering insights that could guide the development of more theoretically grounded approaches to Transformer model design and scaling.

  \end{abstract}
  \thispagestyle{empty}
\end{titlepage}

{\hypersetup{linkcolor=black}
\tableofcontents
}
\newpage

\else

\begin{abstract}

\end{abstract}

\fi

\section{Introduction}

Large Language Models (LLMs), such as OpenAI's ChatGPT~\cite{gpt4}, Google's Gemini~\cite{g24_gemini}, Anthropic's Claude 3.5~\cite{a24}, and Meta's LLaMA 3.3~\cite{m24} have reshaped a wide range of fields by demonstrating unprecedented advancements. These advancements are primarily due to their capability to efficiently process long-context inputs, a crucial feature for tasks like summarizing lengthy documents (e.g., medical reports, legal analyses, technical briefs), enabling superior reasoning and problem-solving performance at a level comparable to expert human analysis. 
At the core of these advancements lies the Transformer architecture~\cite{vsp+17}, driven by its self-attention mechanism. Understanding computational primitives that Transformer components enable is pivotal for principled interpretations and exposing limitations in Transformer-based systems.

Previous research has investigated these questions by analyzing the expressiveness of Transformers. As an illustration, the work in \cite{ms23} showed that constant-depth threshold circuit families can effectively emulate Transformers with precision $c \log n$ and depth-$d$. This holds true in both non-uniform and $\mathsf{L}$-uniform computational models. This result highlights Transformers' computational efficiency and structural adaptability when analyzed through circuit complexity theory's lens. Expanding on these results, \cite{chi24} showed that Transformers with $O(\log n)$ precision belong to $\dlogtime$-uniform $\TC^0$, even when the absolute error is bounded by $2^{-O(\poly(n))}$. 

To augment the capabilities of Transformers, innovations such as Rotation Position Embedding ($\rope$)\cite{sal+24} have been proposed. Through the rotation matrices, $\rope$ improves the sequence length adaptability while enhancing the efficacy of attention mechanisms. Meanwhile, multi-view approaches are increasingly recognized for capturing high-order correlations in diverse data types, including mathematical data~\cite{sht24}, graph structures~\cite{dlg+21, lyh+23}, and multi-modality datasets~\cite{laj15_multmodal}. Models like GPT-4o~\cite{o24_40} and Google's Project Astra~\cite{g24_gemini} exemplify this trend, integrating reasoning across multi-modality in real-time.
Despite these advancements, classical attention mechanisms face representational limitations. Specifically, \cite{sht24} demonstrated that matrix attention can only capture pairwise correlations, falling short in modeling triple-wise or higher-order interactions. Addressing such limitations typically requires multiple layers or carefully designed architectures, complicating the integration of multi-view information.

To overcome these constraints, \cite{sht24} and \cite{as24_iclr} proposed Tensor Attention, a higher-order extension of matrix attention. Tensor Attention intrinsically captures high-order correlations, defined as $\mathsf{Softmax}(Q (K_1 \oslash K_2)^\top)(V_1 \oslash V_2)$ (see Definition~\ref{def:single_tensor_attention}), where $\oslash$ denotes the column-wise Kronecker product (see Definition~\ref{def:tensor_oslash}). Here, $Q$, $K_1/V_1$, and $K_2/V_2$ represent inputs from different views or modalities. This raises a natural question: 
\begin{center}
    {\it Does the $\rope$ and tensor attention enhance the expressiveness of the $\rope$-based tensor attention Transformer?}
\end{center}

This work addresses this question through the lens of circuit complexity, advancing the theoretical understanding of tensor attention and $\rope$-based tensor attention mechanisms. 

We present a rigorous analysis of tensor attention Transformers and $\rope$-based tensor attention Transformers, delineating their intrinsic computational limitations. Our approach methodically evaluates the circuit complexity of each architectural component, ranging from basic trigonometric operations to the comprehensive $\rope$-based tensor attention Transformers. Specifically, it is demonstrated that uniform $\mathsf{TC}^0$ circuits are amenable to simulating the components mentioned above. Furthermore, it is proven that, unless $\mathsf{TC}^0 = \mathsf{NC}^1$, tensor attention Transformers, as well as $\rope$-enhanced tensor attention Transformers with $O(1)$ layers, $\poly(n)$-precision, and a feature dimension $d = O(n)$ are incapable of solving fixed membership problems or $(A_{F,r})^*$ closure problems. This finding underscores fundamental expressivity constraints inherent to tensor attention and $\rope$-based tensor attention architectures.

The summary of our contributions to the theoretical understanding of these architectures and their computational boundaries, rooted in circuit complexity theory, showed as follows:
\begin{itemize}
    \item Unless $\TC^0 = \NC^1$, we demonstrate that a $\dlogtime$-uniform $\TC^0$ circuit family can simulate a tensor attention Transformer or a $\rope$-based tensor attention Transformer, with constant depth, $\poly(n)$ size, and $\poly(n)$ precision (Based on Theorem~\ref{thm:main_result_tensor_attention_tc0} and Theorem~\ref{thm:main_result_tc0}).
    \item We demonstrate that, unless $\TC^0 = \NC^1$, a tensor attention Transformer or a $\rope$-based tensor attention Transformer with $O(1)$ layers, $\poly(n)$ precision, and a feature dimension $d = O(n)$ are incapable of accomplishing the fixed membership problems (Based on Theorem~\ref{thm:tc_fixed_membership} and Theorem~\ref{thm:tc_fixed_membership_tensor_attention}).
    \item We demonstrate that, unless $\TC^0 = \NC^1$, a tensor attention Transformer or a $\rope$-based tensor attention Transformer with $O(1)$ layers, $\poly(n)$ precision, and a feature dimension $d = O(n)$ are incapable of accomplishing the $(A_{F,r})^*$ closure problems (Based on Theorem~\ref{thm:tc_a_closure} and Theorem~\ref{thm:tc_a_closure_tensor_attention}).
\end{itemize}

\section{Related Work}\label{sec:related_work}

\paragraph{The Computational Complexity in Deep Learning.}  
Circuit complexity, a specialized domain within computational complexity theory, investigates the properties of circuit families as computational models. Numerous circuit complexity classes are relevant to the study of machine learning, with $\mathsf{AC}^0$ characterizing problems solvable by highly parallel circuits utilizing elementary logic gates. The class $\mathsf{TC}^0$ generalizes this concept by incorporating circuits that feature threshold gates, while $\mathsf{NC}^1$ encompasses problems solvable by circuits with a depth of $O(\log n)$ and bounded gate arity~\cite{mss22}. It is well-established that $\mathsf{AC}^0 \subset \mathsf{TC}^0 \subseteq \mathsf{NC}^1$, although the question of whether $\mathsf{TC}^0 \neq \mathsf{NC}^1$ remains unresolved. Assuming this inequality holds, \cite{lag+22} demonstrates that, when simulating certain non-solvable semiautomata, the depth of Transformers must necessarily increase with the input sequence length. The circuit complexity is also used to measure some other popular architectures such us Mamba \cite{cll+24_mamba} and Hopfield networks \cite{lll+24}. 

\paragraph{Computation of Transformers.}
Transformers have undeniably revolutionized the field of natural language processing, yet their performance significantly deteriorates when tasked with mathematical computations \cite{c22}. This observation has led to a surge in research aimed at identifying the computational boundaries of Transformer models, particularly in two distinct categories: (1) average-head attention Transformers, which assign a value of 1 to the highest probability in the vector while setting all other probabilities to 0, and (2) softmax-attention Transformers, which utilize the softmax function. Merrill, Sabharwal, and Smith \cite{mss22} demonstrate that average-head attention Transformers are capable of recognizing languages that surpass the computational power of the $\AC^0$ class, yet they remain simulable by threshold circuits with constant-depth, belonging to non-uniform $\TC^0$ complexity class. In a similar vein, \cite{lag+22} establish that softmax-attention Transformers also belong to the non-uniform $\TC^0$ class. Subsequent work by \cite{ms23} builds upon these findings by introducing a similarity function to demonstrate that softmax-attention Transformers fall within the $\mathsf{L}$-uniform $\TC^0$ class. Further advancements by \cite{ms24a} employ first-order logic and $\mathsf{MAJORITY}$ quantifiers \cite{i98} to show that $\dlogtime$-uniform $\TC^0$ circuits can simulate the behavior of these Transformers. In the context of practical applications, such as arithmetic operations and decision-making tasks, \cite{fzg+24} establish that unless $\mathsf{TC}^0 = \mathsf{NC}^1$, Transformers with log-precision cannot efficiently solve arithmetic problems, equation-solving tasks, or context-free grammar (CFG) membership testing \cite{s96}. These results underscore the limitations that Transformers face when applied to mathematical problems. However, recent efforts have aimed to overcome these constraints, introducing innovations such as looper Transformers \cite{as24_roap, lss+24_looped, cll+24_looped}, acceleration techniques \cite{hyw+23, lls+24_beyond, lssy24, lls+24_conv, smn+24, lls+24_fine_grained, lls+24_je, hwsl24_dits, hcl+24_outlier, hlsl24_hopfield, whl+24, xyh+24_bishop, hcw+24_nonparametric, szz21_subquadratic, whh+24_uniform, hwl24_provably,llss24, cls+24,lss+24_multi}, and other related approaches \cite{dswy22_nearly, sy23_automatic, ssx23_neighbor, gms23_regression, xsl24_llm, lls+24_grok, lssz24_dp, hsk+24_lora,kls+24, hwg+24, wsh+24}. These contributions aim to address some of the foundational limitations of Transformer architectures, allowing them to handle increasingly complex and mathematically intensive tasks.

\paragraph{Tensor Computation for High-order Representation.}
Tensors outperform matrices in capturing higher-order relationships within data. Computing low-rank factorizations or approximations of tensors is critical in various computer science applications, including natural language processing \citep{cyym14, lxz+14, lzm+15, bnr+15}, computer vision \citep{adtl09, ply10, lfc+16, clz17}, computer graphics \citep{vt04, wws+05, vas09}, security \citep{acky05, acy06, kb06}, and data mining \citep{kabo10, rs10, ks08, mor11}. Tensors also play a key role in numerous machine learning tasks \citep{pbl15, jo14, hk13, lb+13, zsj+17, ysst19, ssl+22} and other diverse domains \citep{rtp16, cmd+15, zczj14, ycs16, rnss18}.

\paragraph{Roadmap.}
In Section~\ref{sec:pre}, we introduce essential computational techniques and key definitions of Transformers that serve as the foundation for the discussions in the following sections. Section~\ref{sec:complexity_each_step} discusses the computational complexity of conventional tensor attention Transformers. Section~\ref{sec:complexity_each_step_rope} provides a detailed $\rope$-based tensor attention circuit complexity analysis. Section~\ref{sec:hardness} presents the hardness of tensor attention Transformers and $\rope$-based tensor attention Transformers derived from our study. We conclude in Section~\ref{sec:conclusion}.
\section{Preliminary}\label{sec:pre}
This section establishes the essential concepts and definitions. Section~\ref{sec:pre:notation} introduces the fundamental notations that form the basis of analysis. Section~\ref{sec:pre:float} provides an in-depth exploration of float point number computation. Section~\ref{sec:pre:circuit} offers a comprehensive overview of computational complexity classes. Then, Section~\ref{sec:pre:def_tensor_oper} presents essential techniques employed in tensor operations. Finally, Section~\ref{sec:pre:trans_block} explores the fundamental components that constitute the $\rope$-based tensor attention Transformers.

\subsection{Essential Notations}\label{sec:pre:notation}  
Let $n$ represent any positive integer. The set of the first $n$ natural numbers is denoted as $[n] := \{1, 2, \dots, n\}$. The inner product of vectors $\alpha, \beta \in \mathbb{R}^n$ is given by $\langle \alpha, \beta \rangle$. The vector ${\bf 1}_n$ is an $n$-dimensional vector, where each component is one. The $\ell_\infty$ norm of a matrix $W \in \mathbb{R}^{n \times d}$ is represented as $\|W\|_\infty := \max_{m \in [n], n \in [d]} |W_{m,n}|$. Finally, a binary string $x_i \in \{0,1\}^*$ denotes a sequence of arbitrary length. 

\subsection{Float Point Operations}\label{sec:pre:float}
We present basic concepts of the computational foundation. Initially, the exact definitions of float point numbers and the corresponding operations are outlined, which are indispensable in efficient tensor attention computations.

\begin{definition}[Float point number, Definition 9 from~\cite{chi24}]
Any $p$-bit float point number is characterized by a pair $\langle r, k \rangle$, both $r$ and $k$ are integer values. Specifically, the significand of $r$ lies within the range $(-2^p, -2^{p-1}] \cup \{0\} \cup [2^{p-1}, 2^p)$, while the exponent $k$ is constrained to the interval $[-2^p, 2^p)$. The product $r \cdot 2^k$ is the real value corresponding to the float point number $\langle r, k \rangle$. The collection of all possible $p$-bit float point numbers is represented by $\mathbb{F}_p$. 
\end{definition}

\begin{definition}[Rounding, Definition 9 from~\cite{chi24}]
Given any real number or float point value $x$, the notation $\operatorname{round}_p(x)$ denotes the $p$-bit float point number closest to $x$. In cases where we have different numbers equidistant from $x$, the tie-breaking convention dictates that $\operatorname{round}_p(x)$ will be the even significand one. 
\end{definition}

Based on the foundational concepts mentioned above, we now introduce the key operations involved in tensor attention. 

\begin{definition}[Float point operations, Lemma 10 from~\cite{chi24}]\label{def:float_operations}
Let $x$ and $y$ represent two integers, then $x \oslash y$ defined as follows:
\begin{align*}
x \oslash y := 
\begin{cases}
1 / 8 + x / y & \text{if } x / y \text{ is not a multiple of } 1 / 4, \\
x / y & \text{if } x / y \text{ is a multiple of } 1 / 4.
\end{cases}
\end{align*}

Let $\langle r_1, k_1 \rangle$ and $\langle r_2, k_2 \rangle$ all denoted as $p$-bit float points, then we have: 

\begin{itemize}
  \item \textbf{Addition:}
  \ifdefined\isarxiv
  \begin{align*}
  \langle r_1, k_1 \rangle + \langle r_2, k_2 \rangle := 
  \begin{cases}
  \operatorname{round}_p(\langle r_1 + r_2 \oslash 2^{k_1 - k_2}, k_1 \rangle) & \text{if } k_1 \geq k_2, \\
  \operatorname{round}_p(\langle r_1 \oslash 2^{k_2 - k_1} + r_2, k_2 \rangle) & \text{if } k_1 \leq k_2.
  \end{cases}
  \end{align*}
  \else
  \begin{align*}
  &~ \langle r_1, k_1 \rangle + \langle r_2, k_2 \rangle \\
  := &~ 
  \begin{cases}
  \operatorname{round}_p(\langle r_1 + r_2 \oslash 2^{k_1 - k_2}, k_1 \rangle) & \text{if } k_1 \geq k_2, \\
  \operatorname{round}_p(\langle r_1 \oslash 2^{k_2 - k_1} + r_2, k_2 \rangle) & \text{if } k_1 \leq k_2.
  \end{cases}
  \end{align*}
  \fi

  \item \textbf{Comparison:}
  \begin{align*}
  \langle r_1, k_1 \rangle \leq \langle r_2, k_2 \rangle \Leftrightarrow 
  \begin{cases}
  r_1 \leq r_2 \oslash 2^{k_1 - k_2} & \text{if } k_1 \geq k_2, \\
  r_1 \oslash 2^{k_2 - k_1} \leq r_2 & \text{if } k_1 \leq k_2.
  \end{cases}
  \end{align*}

  \item \textbf{Multiplication:}
  \begin{align*}
  \langle r_1, k_1 \rangle \times \langle r_2, k_2 \rangle := \operatorname{round}_p(\langle r_1 r_2, k_1 + k_2 \rangle).
  \end{align*}

  \item \textbf{Division:}
  \ifdefined\isarxiv
  \begin{align*}
  \langle r_1, k_1 \rangle \div \langle r_2, k_2 \rangle := \operatorname{round}_p(\langle r_1 2^{p-1} \oslash r_2, k_1 - k_2 - p + 1 \rangle).
  \end{align*}
  \else
  \begin{align*}
  &~ \langle r_1, k_1 \rangle \div \langle r_2, k_2 \rangle \\
  := &~ \operatorname{round}_p(\langle r_1 2^{p-1} \oslash r_2, k_1 - k_2 - p + 1 \rangle).
  \end{align*}
  \fi

  \item \textbf{Floor:}
  \begin{align*}
  \lfloor \langle r, k \rangle \rfloor := 
  \begin{cases}
  \operatorname{round}(\langle r / 2^{-k}, 0 \rangle) & \text{if } k < 0, \\
  \langle r 2^k, 0 \rangle & \text{if } k \geq 0.
  \end{cases}
  \end{align*}

\end{itemize}
\end{definition}
The operations mentioned above are capable of efficient hardware implementation, as demonstrated by the following lemmas:

\begin{lemma}[Float point operations in $\TC^0$, Lemma 10 and Lemma 11 from~\cite{chi24}]\label{lem:float_operations_TC} 
If integer $0 < p \leq \poly(n)$, 
then we say the conditions below are satisfied:
\begin{itemize}

\item \textbf{Part 1.} As we described in Definition~\ref{def:float_operations}, the operations addition, division, multiplication, and comparison of two p-bit float point numbers are calculable by a constant depth $\poly(n)$ size uniform threshold circuit. $d_\mathrm{std}$ denotes the deepest depth necessitated for executing these operations.

\item \textbf{Part 2.} We can execute $n$ $p$-bit float point numbers repeated multiplication using a constant depth $\poly(n)$ size uniform threshold circuit. The required depth for this iterated multiplication process is denoted as $d_\otimes$.

\item \textbf{Part 3.} We can approximate $n$ $p$-bit float point numbers sequential addition and rounding using a constant depth $\poly(n)$ size uniform threshold circuit. The depth needed for iterated addition is represented by $d_\oplus$.

\end{itemize}
\end{lemma}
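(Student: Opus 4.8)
The plan is to reduce every floating-point operation to a constant number of operations on the integer significand and the integer exponent, and then invoke the classical facts that integer addition, comparison, multiplication and division all lie in $\dlogtime$-uniform $\TC^0$, together with the deeper results that the $n$-fold iterated sum and the $n$-fold iterated product of integers also lie in $\dlogtime$-uniform $\TC^0$. The auxiliary operation $\oslash$ of Definition~\ref{def:float_operations} is precisely the bookkeeping device that turns a division (whose exact rational value need not be representable) into a single integer division followed by a rounding step, without leaving $\TC^0$; once the integer primitives are available, each line of Definition~\ref{def:float_operations} becomes a bounded-depth subcircuit and the whole operation composes to constant depth.

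For \textbf{Part 1} I would handle the four binary operations separately. Comparison of $\langle r_1,k_1\rangle$ and $\langle r_2,k_2\rangle$ reduces, after aligning the two operands to a common exponent by a power-of-two shift, to one integer comparison, exactly the ``$\leq$'' line of Definition~\ref{def:float_operations}. Addition keeps the larger exponent, shifts the other significand down by $|k_1-k_2|$ through $\oslash$, adds the two integers, and then renormalizes by locating the leading bit (a threshold count) and rounding to $p$ bits. Multiplication is the integer product $r_1r_2$, the integer sum $k_1+k_2$ of the exponents, and one rounding. Division is the single formula $\operatorname{round}_p(\langle r_1 2^{p-1}\oslash r_2,\ k_1-k_2-p+1\rangle)$: one integer division, bounded arithmetic on the exponents, and one rounding. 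Each of rounding, shifting, integer $\pm$, integer $\times$, integer division, and integer comparison is a constant-depth $\poly(n)$-size uniform threshold computation, so every operation composes to constant depth; set $d_{\mathrm{std}}$ to the maximum over the four.

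For \textbf{Part 2}, iterated multiplication of $n$ $p$-bit floats, I would multiply all significands with a single iterated-integer-product circuit (the Hesse--Allender--Barrington theorem that iterated integer product is in $\dlogtime$-uniform $\TC^0$), sum all exponents with one iterated-integer-addition circuit, and apply a single rounding to return to $p$ bits; the intermediate integers have $\poly(n)$ bits, so this has $\poly(n)$ size and depth $d_\otimes=O(1)$. For \textbf{Part 3}, iterated addition, the subtlety---and the reason only an approximation is claimed---is that the summands can have widely separated exponents, so exactly reassociated float addition would cascade roundings. Instead I would compute $k^\star=\max_i k_i$ (a $\TC^0$ maximum), shift each significand down by $k^\star-k_i$ while discarding the bits that fall below the working precision, perform one iterated integer addition of the aligned significands, and renormalize and round once. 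A short error analysis---at most $n$ discarded tails, each of relative size $2^{-\Theta(p)}$---bounds the deviation from the true iterated sum within the tolerance meant by ``approximate,'' and all of max, shifts, iterated integer addition and rounding are constant-depth; set $d_\oplus$ accordingly.

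The main obstacle is not in these reductions, which are routine, but in the black boxes they rest on: that integer division and $n$-fold integer multiplication are in $\dlogtime$-uniform $\TC^0$ are substantial results (Hesse; Hesse--Allender--Barrington), which I would cite rather than reprove; the remaining care is the error bookkeeping in Part 3 and checking $\dlogtime$-uniformity of the glue connecting the pieces. Since the statement is quoted verbatim as Lemmas~10 and~11 of~\cite{chi24}, the shortest rigorous route is to cite that reference, with the above serving as the reconstruction of its argument.
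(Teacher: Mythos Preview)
The paper does not prove this lemma at all: it is stated purely as a citation (Lemmas~10 and~11 of~\cite{chi24}) and used as a black box throughout. Your final sentence already identifies this as the shortest rigorous route, and that is exactly what the paper does; the detailed reconstruction you give is reasonable and standard but goes well beyond anything the paper supplies.
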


\begin{corollary}[Floor operation in $\TC^0$, Corollary 3.17 from~\cite{cll+24_rope}]\label{cor:floor_float_TC}
For any integer $0 < p \leq \poly(n)$, a $\poly(n)$ size constant depth uniform threshold circuit is able to calculate the floor operation from Definition~\ref{def:float_operations} on a $p$-bit float point number. The operation's maximum depth is bounded by $d_\mathrm{std}$, as established in Lemma~\ref{lem:float_operations_TC}.
\end{corollary}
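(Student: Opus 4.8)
The plan is to implement the two-case definition of $\lfloor \langle r, k\rangle\rfloor$ given in Definition~\ref{def:float_operations} branch by branch, showing each branch is a $\dlogtime$-uniform constant-depth polynomial-size threshold circuit, and then to select between them. Deciding which branch applies amounts to comparing the integer exponent $k$ against $0$, which is an $\AC^0$ operation and hence in $\TC^0$; reading off the sign bit of the significand $r$ is likewise in $\AC^0$. So all of the actual content lies in the two branches themselves, and the routing/selection layer is essentially free.

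For the branch $k \geq 0$, the real value $r\cdot 2^k$ is already an integer, so the floor equals the input $\langle r,k\rangle$ itself (the form $\langle r 2^k,0\rangle$ appearing in the definition coincides with this whenever $r 2^k$ fits in $p$ bits), and this branch needs nothing beyond the sign test on $k$. For $k < 0$ I would split once more on whether $-k \le p$ or $-k > p$. In the regime $-k > p$ we have $0 < |r| < 2^p < 2^{-k}$ whenever $r \neq 0$, so $r\cdot 2^k$ lies strictly in $(-1,1)$ and is non-integral, whence the floor is $-1$ if $r<0$ and $0$ otherwise — a constant output gated only by the sign bit of $r$, hence $\AC^0$. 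In the remaining regime $-k \le p$, the quantity $\langle r/2^{-k},0\rangle$ composed with the rounding prescribed in Definition~\ref{def:float_operations} is just an arithmetic right shift of the $p$-bit integer $r$ by at most $p$ positions together with a rounding correction; this is a special case of the division operation of Definition~\ref{def:float_operations}, which Lemma~\ref{lem:float_operations_TC}, Part 1, places in a constant-depth polynomial-size uniform threshold circuit of depth at most $d_\mathrm{std}$.

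Finally I would assemble the pieces with a multiplexer driven by the sign test on $k$. Each branch is either the identity or (a special case of) a single operation from Definition~\ref{def:float_operations}, so each has depth at most $d_\mathrm{std}$ and polynomial size, and the sign tests and selector contribute only $O(1)$ further layers that can be absorbed into this bound; $\dlogtime$-uniformity is preserved because we compose only a constant number of $\dlogtime$-uniform families (comparison, division, rounding) from Lemma~\ref{lem:float_operations_TC}. The one point that needs care — and the only place the claim is not completely immediate from that lemma — is the exponent range: since $k \in [-2^p, 2^p)$, the number $2^{|k|}$ cannot be written down with $\poly(n)$ bits, so the large-$|k|$ cases must be dispatched by the constant-output shortcuts above rather than by literally forming $2^{|k|}$ and dividing by it.
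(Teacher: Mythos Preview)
The paper does not prove this corollary itself; it is quoted from \cite{cll+24_rope} without argument, so there is no in-paper proof to compare against. Your argument is correct and is the natural one: case by case the floor is either the identity (when $k\ge 0$, the value is already an integer), a constant determined by the sign of $r$ (when $-k>p$, so that $|r\cdot 2^k|<1$), or an integer right-shift-and-round by at most $p$ positions, which is a special case of the division in Definition~\ref{def:float_operations} and hence handled by Part~1 of Lemma~\ref{lem:float_operations_TC}. Your attention to the exponent range --- that one cannot literally form $2^{|k|}$ and divide by it when $|k|$ may be as large as $2^p$ --- is precisely the point that prevents a one-line reduction to the lemma and justifies the separate treatment of the large-$|k|$ case.

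The only loose end is the exact depth bound: the sign tests and multiplexer add $O(1)$ layers on top of the division branch, so as written you obtain depth $d_{\mathrm{std}}+O(1)$ rather than $d_{\mathrm{std}}$ on the nose. This is immaterial for the $\TC^0$ conclusion, and the corollary's phrasing is presumably meant up to such additive constants (or one can run the comparison branch in parallel with the division branch, since comparison is itself among the operations bounded by $d_{\mathrm{std}}$, and fold the selector into the final gate).
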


\begin{lemma}[Computing $\exp$ in $\TC^0$, Lemma 12 from~\cite{chi24}]\label{lem:exp}
For any integer $0 < p \leq \poly(n)$ and any $p$-bit float point number $x$, it is computable to approximate most $2^{-p}$ relative error $\exp(x)$ using $\poly(n)$ size constant depth uniform threshold circuit. The depth required for this computation is denoted by $d_{\exp}$.
\end{lemma}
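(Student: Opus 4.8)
The plan is to compute $\exp(x)$ by composing a \emph{range reduction} with a \emph{truncated Taylor expansion}, using only the primitive operations supplied by Lemma~\ref{lem:float_operations_TC} and Corollary~\ref{cor:floor_float_TC}, each of which already lives in a constant-depth $\poly(n)$-size uniform threshold circuit. Since a constant number of such stages composed in sequence again has constant depth and $\poly(n)$ size, the only real work is (a) checking that every intermediate quantity is a valid float of $\poly(n)$ bits and (b) controlling the accumulated error.

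\textbf{Range reduction.} Hard-wire the constants $\log_2 e$ and $\ln 2$ into the circuit to a working precision $p' = p + \Theta(\log p) = \Theta(p)$ bits (a uniform threshold circuit can produce these). Set $y := x \cdot \log_2 e$, $a := \lfloor y \rfloor$, and $b := (y-a)\cdot \ln 2 \in [0,\ln 2)$, so that $\exp(x) = 2^{y} = 2^{a}\cdot e^{b}$. Forming $y$ and $b$ uses a constant number of multiplications/subtractions (Lemma~\ref{lem:float_operations_TC}, Part~1), and $a$ uses one floor (Corollary~\ref{cor:floor_float_TC}); all inputs here stay within the $p'$-bit float range since $|a|$ is at most about $2^{p'}$, which fits the exponent field. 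Multiplying the eventual answer by $2^{a}$ is one further floating-point multiplication (or simply an additive shift of the exponent). If $a$ is so large, or so negative, that $2^{a}e^{b}$ over- or underflows the exponent range $[-2^{p},2^{p})$, the circuit outputs the saturated value; otherwise it proceeds. (This also handles negative $x$ with no special case, since then $a<0$ but still $b\in[0,\ln 2)$.)

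\textbf{Polynomial evaluation on the reduced argument.} Approximate $e^{b}$ on $b\in[0,\ln 2)$ by $S_m := \sum_{j=0}^{m} b^{j}/j!$ with $m = p+2 = \Theta(p)$; the tail is $\sum_{j>m} b^{j}/j!\le 2/(m+1)!\le 2^{-p-1}$, and because $e^{b}\ge 1$ this is simultaneously an absolute and a relative error bound. Each term $b^{j}/j!$ is obtained by (i) iterated multiplication of $j\le m$ copies of $b$ (Lemma~\ref{lem:float_operations_TC}, Part~2, depth $d_\otimes$); (ii) forming $j!$ by iterated multiplication of the integers $1,\dots,j$, each of which has $O(\log m)$ bits (same part); and (iii) one division (Part~1). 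Summing the $m+1 = \poly(n)$ resulting floats is a single iterated addition (Part~3, depth $d_\oplus$), and a final multiplication by $2^{a}$ and a rounding back to $p$ bits complete the computation. The total depth is $O(d_{\mathrm{std}}+d_\otimes+d_\oplus)$, a constant which we name $d_{\exp}$; the size is $\poly(n)$ (polynomially many branches, each of polynomial size); and uniformity is inherited since the term structure, the factorials, and the hard-wired constants are all produced by a uniform process.

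\textbf{Main obstacle.} The delicate part is the error bookkeeping rather than the circuit layout. I must verify that carrying everything at precision $p' = p + \Theta(\log p)$ is enough so that the $O(m)$ rounding errors — in the range reduction, in each monomial $b^{j}$ (itself an iterated product of up to $m$ rounded quantities, so contributing a relative error of order $m\,2^{-p'}$), in each division by $j!$, and in the final iterated sum of $m+1$ terms — together stay below $2^{-p-1}$ in absolute value; combined with the $2^{-p-1}$ truncation error this gives the claimed $2^{-p}$ relative error after the final rounding. I would also double-check the normalization at the seams of the range reduction (e.g.\ $b$ near $\ln 2$, $y<0$, and expressing $2^{a}$ as a legal $p$-bit float), but these become routine once the precision budget $p'$ is fixed.
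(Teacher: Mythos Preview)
The paper does not prove this lemma; it is stated verbatim as Lemma~12 of~\cite{chi24} and used as a black box throughout. So there is no in-paper proof to compare against, and your proposal is supplying an argument the paper deliberately omits.

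Your reconstruction is correct in outline and is the standard route: range-reduce via $y=x\log_2 e$, $a=\lfloor y\rfloor$, $b=(y-a)\ln 2\in[0,\ln 2)$ so that $\exp(x)=2^a e^b$, then evaluate $e^b$ by a truncated Taylor series of degree $\Theta(p)$ using the iterated-product and iterated-sum primitives from Lemma~\ref{lem:float_operations_TC}, and finally shift by $2^a$. The depth accounting $O(d_{\mathrm{std}}+d_\otimes+d_\oplus)$ and the overflow handling are fine.

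One quantitative point deserves tightening: your working precision $p'=p+\Theta(\log p)$ is not enough for the range reduction. In the non-overflow regime $|x|$ can be as large as $\Theta(2^p)$ (since the exponent field ranges over $[-2^p,2^p)$), so the absolute rounding error in $y=x\log_2 e$ is of order $2^{p-p'}$, which becomes a \emph{relative} error of order $2^{p-p'}$ in $\exp(x)=2^y$. To push this below $2^{-p}$ you need $p'\ge 2p+O(1)$, not $p+\Theta(\log p)$. This does not affect the $\TC^0$ conclusion (it is still $\poly(n)$ precision), but your stated budget would not meet the claimed $2^{-p}$ relative error; fix the constant and the rest of your error bookkeeping goes through.
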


\begin{lemma}[Computing square root in $\TC^0$, Lemma 12 from~\cite{chi24}]\label{lem:sqrt}
Given an integer $p$ such that $0 < p \leq \poly(n)$ and a $p$-bit float point number $x$, a constant depth $\poly(n)$ size uniform threshold circuit exists to calculate $\sqrt{x}$ with a relative error bounded by $2^{-p}$. The depth required for this operation is represented by $d_\mathrm{sqrt}$.
\end{lemma}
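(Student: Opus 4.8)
The plan is to reduce $\sqrt{x}$ to range reduction plus the evaluation of a truncated power series, and then to certify that every step lies in $\dlogtime$-uniform $\TC^0$ using the iterated-arithmetic primitives of Lemma~\ref{lem:float_operations_TC} and the floor operation of Corollary~\ref{cor:floor_float_TC}. Write the input as $x=\langle r,k\rangle$, i.e.\ $x=r\cdot 2^{k}$; we may assume $x>0$ (return $0$ when $r=0$, and $\sqrt{\cdot}$ is only required on nonnegative inputs). First I would normalize: using the bit-length of the significand together with a shift of the exponent (available via Corollary~\ref{cor:floor_float_TC} and Lemma~\ref{lem:float_operations_TC}, Part 1, and exact on floats), write $x=2^{2s}\cdot v$ with $s\in\Z$ and $v\in[1,4)$, so that $\sqrt{x}=2^{s}\sqrt{v}$. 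A constant number of sub-cases on $v$ — for instance $v\in[1,3/2)$, $v\in[3/2,3)$, $v\in[3,4)$ — then lets me write $\sqrt{v}=\kappa\cdot\sqrt{1+w}$ with $\kappa\in\{1,\sqrt2,2\}$ a fixed constant (computed once to the working precision) and $|w|\le 1/2$.

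It remains to approximate $\sqrt{1+w}$ for $|w|\le 1/2$, for which I would use the binomial series $\sqrt{1+w}=\sum_{i=0}^{T}\binom{1/2}{i}w^{i}+(\text{tail})$ truncated at $T=p+O(1)$: since $|\binom{1/2}{i}|\le 1$ and $|w|\le 1/2$, the absolute tail is at most $2^{-T+1}$, hence relative error at most $2^{-T+2}$ because $\sqrt{1+w}\ge 1/2$. The coefficients $\binom{1/2}{i}$ are fixed rationals whose numerator $\prod_{j=0}^{i-1}(1-2j)$ and denominator $2^{i}i!$ have $\poly(n)$ bits; the circuit can either hardwire them or form them by iterated multiplication of $\poly(n)$ small integers (Lemma~\ref{lem:float_operations_TC}, Part 2). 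The powers $w^{1},\dots,w^{T}$ come from iterated multiplication (Part 2), the products $\binom{1/2}{i}w^{i}$ from one multiplication each (Part 1), and the final sum from iterated addition (Part 3); multiplying by $\kappa$ and by $2^{s}$ at the end finishes. All of this is constant depth and $\poly(n)$ size. Carrying every intermediate value at precision $p'=\Theta(p)$ (still $\poly(n)$, so Lemma~\ref{lem:float_operations_TC} applies) and rounding only at the end yields overall relative error at most $2^{-p}$; the depth is a fixed multiple of $d_{\mathrm{std}}+d_\otimes+d_\oplus$, which we name $d_{\mathrm{sqrt}}$. An alternative route is $\sqrt{x}=\exp(\tfrac12\ln x)$ using Lemma~\ref{lem:exp} together with a $\TC^0$ routine for $\ln$ obtained from the same truncated-series argument applied to $\ln(1+w)$.

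The main obstacle is the range reduction combined with the precision bookkeeping, not any individual circuit construction. Naively normalizing the mantissa into $[1,2)$ fails, because the binomial series for $\sqrt{1+w}$ converges only polynomially as $w\to 1$ (its terms decay like $i^{-3/2}$), which would force $T$ exponential in $p$; one genuinely needs the extra sub-case split and the hardwired constants $\sqrt2,2$ to push $|w|$ below $1/2$ so that convergence becomes geometric. Once $w$ is bounded away from $1$, the error analysis — combining the $O(2^{-p})$ truncation error with the $O(T\cdot 2^{-p'})$ accumulated rounding error over the $O(T)$ floating-point operations — is routine and is closed out by taking $p'=\Theta(p)$.
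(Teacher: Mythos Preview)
The paper does not contain a proof of this lemma: it is stated as a quotation of Lemma~12 from~\cite{chi24} and is used as a black box (only in the layer-norm lemma). So there is no ``paper's own proof'' to compare against; any comparison would have to be to the original argument in~\cite{chi24}, which is outside this paper.

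That said, your proposal is a sound and standard route. Range-reducing to $\sqrt{1+w}$ with $|w|\le 1/2$ and then truncating the binomial series at $T=p+O(1)$ terms gives the claimed relative error, and the iterated-multiplication and iterated-addition primitives of Lemma~\ref{lem:float_operations_TC} place everything in uniform $\TC^0$. One point worth tightening: your middle sub-case uses $\kappa=\sqrt{2}$, which must itself be known to $\poly(n)$ bits, so ``hardwired'' is not quite right. You can either compute $\sqrt{2}=2\sqrt{1+(-1/2)}$ by the very same truncated series (no circularity, since $w=-1/2$ is fixed), or avoid irrational $\kappa$ altogether by centering the expansion at rational squares (e.g.\ $c\in\{1,5/4,3/2,7/4\}$ and write $\sqrt{v}=c\sqrt{1+(v/c^2-1)}$), which keeps every $\kappa$ rational while still forcing $|w|$ bounded away from $1$. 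With that adjustment the argument goes through.
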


\subsection{Circuit Complexity}\label{sec:pre:circuit}  
In computational theory, a Boolean circuit, constructed using basic gates such as $\mathsf{AND}$, $\mathsf{OR}$, and $\mathsf{NOT}$, represents a core model of computation. A precise mathematical definition of this structure comes below. 

\begin{definition}[Boolean Circuit, Definition 6.1 from~\cite{ab09}]
An $n$ variables Boolean circuit is defined as $C_n: \{0, 1\}^n \to \{0, 1\}$ and is depicted by a directed acyclic graph (DAG). In this representation, logical gates such as $\mathsf{AND}$, $\mathsf{OR}$, and $\mathsf{NOT}$ correspond to the vertices of the graph. The input vertices, each linked to one of the $n$ Boolean variables, have an in-degree of 0, whereas non-input vertices derive their values from the outputs of preceding gates in the structure.
\end{definition}

\begin{definition}[Languages, Definition 6.2 from~\cite{ab09}]
A Boolean circuit family $\mathcal{C}$ is said to recognize language $L \subseteq \{0, 1\}^*$ if a Boolean circuit $C_{| z |} \in \mathcal{C}$ with $| z |$ variables exists, s.t., $C_{| z |}(z) = 1$, iff $z \in L$, for every string $z \in \{0, 1\}^*$. 
\end{definition}

\begin{definition}[$\NC^i$, Definition 6.21 from~\cite{ab09}]
The class $\NC^i$ is defined as the set of languages that are recognizable using Boolean circuits of size $O(\poly(n))$ and depth $O((\log n)^i)$, with logical gates of bounded fan-in, including $\mathsf{NOT}$, $\mathsf{OR}$, and $\mathsf{AND}$ gates.
\end{definition}

When Boolean circuits are permitted to incorporate gates such as $\mathsf{AND}$ and $\mathsf{OR}$ with unbounded fan-in, their ability to process languages becomes significantly enhanced. This development leads to the introduction of the complexity class $\mathsf{AC^i}$.

\begin{definition}[$\AC^i$, Definition 6.22 from~\cite{ab09}]
Languages which can be computed by the Boolean circuit of depth $O((\log n)^i)$, size $O(\poly(n))$, unbounded fan-in gates, including $\mathsf{AND}$, $\mathsf{OR}$, $\mathsf{NOT}$, are contained in the class $\AC^i$.
\end{definition}

The $\mathsf{MAJORITY}$ gates can simulate $\mathsf{AND}$, $\mathsf{NOT}$, $\mathsf{OR}$ gates, which yield an output of 1 if the majority of inputs are 1, and 0 otherwise. By incorporating $\mathsf{MAJORITY}$ gates, one can define a broader complexity class known as $\TC^i$.

\begin{definition}[$\TC^i$, Definition 4.34 from~\cite{vol99}]\label{def:tc}
If we have languages are recognizable by $O(\poly(n))$ size Boolean circuits of $O((\log n)^i)$ depth, and unbounded fan-in gates, including $\mathsf{MAJORITY}$, $\mathsf{NOT}$, $\mathsf{OR}$, and $\mathsf{AND}$ gates. If half of the inputs are 1, the $\mathsf{MAJORITY}$ gate will output 1. 

The class $\TC^i$ contains languages that are recognizable by Boolean circuits of size $O(\poly(n))$, depth $O((\log n)^i)$, and gates with unbounded fan-in, including $\mathsf{NOT}$, $\mathsf{OR}$, $\mathsf{AND}$, and $\mathsf{MAJORITY}$ gates. A $\mathsf{MAJORITY}$ gate outputs one if more than half of its inputs are one.
\end{definition}

\begin{remark}
As Definition~\ref{def:tc} shows, $\mathsf{MOD}$ or $\mathsf{THRESHOLD}$ gates (for prime moduli) can replace $\mathsf{MAJORITY}$ gates. Boolean circuits employing such gates are collectively referred to as threshold circuits.
\end{remark}

Next, we formally introduce the class $\mathsf{P}$.

\begin{definition}[$\mathsf{P}$, Definition 1.20 from~\cite{ab09}]
A language is considered to be in $\mathsf{P}$ if it can be decided by a deterministic Turing machine within polynomial time of input size. 
\end{definition}

The hierarchical relationships among certain circuit families are encapsulated in the following well-known result.

\begin{fact}[Corollary, Corollary 4.35 from~\cite{vol99}]\label{fact:folklore}
Any $i \in \mathbb{N}$, the following inclusions are valid:
\[
\NC^i \subseteq \AC^i \subseteq \TC^i \subseteq \NC^{i+1} \subseteq \mathsf{P}.
\]
\end{fact}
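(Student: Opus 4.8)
The plan is to verify the four inclusions one at a time, from left to right; only the third, $\TC^i \subseteq \NC^{i+1}$, requires a genuine argument, while the other three are essentially definitional.

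First, $\NC^i \subseteq \AC^i$ holds with no change in depth or size, since a bounded fan-in circuit over $\{\mathsf{AND},\mathsf{OR},\mathsf{NOT}\}$ is already an unbounded fan-in circuit over the same basis. Second, $\AC^i \subseteq \TC^i$: as noted in the remark following Definition~\ref{def:tc}, a $\mathsf{MAJORITY}$ gate can simulate $\mathsf{AND}$, $\mathsf{OR}$, and $\mathsf{NOT}$ (for instance, $\mathsf{AND}$ of $k$ bits is $\mathsf{MAJORITY}$ of those $k$ bits padded with $k-1$ zeros, and dually for $\mathsf{OR}$), so replacing each gate of an $\AC^i$ circuit by a constant-size gadget of $\mathsf{MAJORITY}$ gates yields a $\TC^i$ circuit of the same depth up to a constant factor and still polynomial size. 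Fourth (out of order), $\NC^{i+1} \subseteq \mathsf{P}$: on a given input a deterministic Turing machine topologically sorts the $O(\poly(n))$-size circuit DAG and evaluates its gates one by one in polynomial time.

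The substantive step is $\TC^i \subseteq \NC^{i+1}$. The key point is that a single unbounded fan-in $\mathsf{MAJORITY}$ gate on $m \le \poly(n)$ inputs lies in $\NC^1$: summing $m$ Boolean inputs reduces, via iterated carry-save ($3$-to-$2$) addition, to adding two $O(\log m)$-bit integers after $O(\log m)$ constant-depth rounds, and this final addition together with a comparison against the threshold $\lceil m/2 \rceil$ is computable by a bounded fan-in circuit of depth $O(\log m) = O(\log n)$ and polynomial size. Substituting such an $O(\log n)$-depth bounded fan-in subcircuit for every $\mathsf{MAJORITY}$ (and $\mathsf{AND}$/$\mathsf{OR}$, which are even easier) gate in a $\TC^i$ circuit of depth $O((\log n)^i)$ produces a bounded fan-in circuit of depth $O((\log n)^i \cdot \log n) = O((\log n)^{i+1})$ of polynomial size, i.e.\ an $\NC^{i+1}$ circuit.

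The main obstacle — indeed the only nontrivial point — is the $O(\log n)$-depth bounded fan-in implementation of $\mathsf{MAJORITY}$: a naive balanced binary tree of ripple-carry additions would cost an extra $\log\log n$ factor, so one genuinely needs the carry-save trick (or an equivalent redundant-count representation) to keep the depth truly logarithmic. One should also check that all of these transformations are local and index-computable, so they preserve $\dlogtime$- or $\mathsf{L}$-uniformity; for the folklore statement as quoted from~\cite{vol99}, however, the non-uniform version already suffices.
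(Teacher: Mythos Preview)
The paper does not prove this statement at all; it is quoted as a ``Fact'' and attributed to Corollary~4.35 of Vollmer's textbook, with no accompanying argument. So there is nothing to compare your proof against on the paper's side.

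Your write-up is the standard textbook argument and is correct. The first, second, and fourth inclusions are indeed immediate for the reasons you give (and the padding trick for simulating $\mathsf{AND}/\mathsf{OR}$ by $\mathsf{MAJORITY}$ is exactly right). The only substantive step, $\TC^i \subseteq \NC^{i+1}$, you handle by the usual route: $\mathsf{MAJORITY}$ on $m$ bits is in $\NC^1$ via carry-save addition followed by a logarithmic-depth comparison, and replacing each threshold gate by such a depth-$O(\log n)$ bounded fan-in gadget multiplies the depth of a $\TC^i$ circuit by $O(\log n)$. Your caveat about the naive binary tree of ripple-carry adders costing an extra $\log\log n$ factor is well taken and is the one place where a student could slip; the carry-save reduction (or an equivalent symmetric-function argument) is genuinely needed. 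Your closing remark about uniformity is also appropriate: the substitutions are local and preserve $\dlogtime$-uniformity, though for the use the paper makes of this fact the non-uniform version would already suffice.
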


\begin{remark}
If $i = 0$, it has been established $\NC^0 \subsetneq \AC^0 \subsetneq \TC^0$. However, it remains unresolved whether $\TC^0 \subsetneq \NC^1$. Moreover, the question of whether $\NC := \cup_{i \in \mathbb{N}} \NC^i \subsetneq \mathsf{P}$ is an open problem. Additional details can be found in Corollary 4.35 from~\cite{vol99}.
\end{remark}

Non-uniform circuit families, characterized by their lack of consistent structural design across varying input sizes, are theoretically capable of addressing undecidable problems. Nevertheless, their impracticality arises from the infinite length required for their description. In contrast, uniform circuit families, which adhere to a systematic computational model, hold greater relevance in the study of complexity and formal language theory. We begin with the definition of $\mathsf{L}$-uniformity.

\begin{definition}[$\mathsf{L}$-uniformity class, Definition 6.5 from~\cite{ab09}]
Denote $\mathsf{C}$ as a class of languages represented by circuit family $\mathcal{C}$ (such as, $\NC^i$, $\AC^i$, or $\TC^i$). A language $L \subseteq \{0, 1\}^*$ is classified as belonging to the $\mathsf{L}$-uniform class of $\mathsf{C}$ if existing a Turing machine can map $1^n$ to $\mathcal{C}$ class circuit with $n$ variables in $O(\log n)$ space, for each $n \in \mathbb{N}$, and the resulting circuit $C_n$ recognizes $L$.
\end{definition}

Then, the $\dlogtime$-uniformity and examine its correspond to $\mathsf{L}$-uniformity will be introduced. 

\begin{definition}[$\dlogtime$-uniformity, Definition 4.28 from~\cite{bi94}]
Let $\mathsf{C}$ be a class of languages represented by circuit family $\mathcal{C}$ (such as $\NC^i$, $\AC^i$, or $\TC^i$). A language $L \subseteq \{0, 1\}^*$ is defined to belong to the $\dlogtime$-uniform class of $\mathcal{C}$ if a random-access Turing machine can map $1^n$ to $n$ variables circuit $C_n$ in $\mathcal{C}$ within $O(\log n)$ time, for every $n \in \mathbb{N}$, such that $C_n$ recognizes $L$.
\end{definition}

\begin{remark}
The concept of $\mathsf{DLOGTIME}$-uniformity aligns with that of $\mathsf{L}$-uniformity, except in smaller circuit classes that do not have the capability to imitate the constructing machine. Further exploration of uniformity concepts can be found in~\cite{bi94, hab02}. Within this paper, references to uniform $\TC^0$ pertain specifically to $\mathsf{DLOGTIME}$-uniform $\TC^0$.
\end{remark}

\subsection{Tensor Operation Analysis Techniques}\label{sec:pre:def_tensor_oper}
We first define operations such as the Kronecker product, a matrix operation that takes two matrices of any size and produces a block matrix. Unlike standard matrix multiplication, it is useful for introducing and analyzing tensor attention. Then, we introduce some key techniques for applying tensor attention to RoPE. 

\begin{definition}[$\otimes$ Kronecker product]\label{def:tensor_otimes}
Given $K_1 \in \mathbb{R}^{n_1 \times d_1}$ and $K_2 \in \mathbb{R}^{n_2 \times d_2}$, let $K := K_1 \otimes K_2 \in \mathbb{R}^{n_1 n_2 \times d_1 d_2}$ be defined for any $i_1 \in [n_1], j_1 \in [d_1] $ and $ i_2 \in [n_2],  j_2 \in [d_2]$ as
\begin{align*}
K_{i_1 + (i_2-1)n_1 , j_1 + (j_2-1)d_1} = (K_1)_{i_1,j_1} \cdot (K_2)_{i_2,j_2}.
\end{align*}
\end{definition}

\begin{definition}[$\oslash$ column-wise Kronecker product]\label{def:tensor_oslash}
Given matrices $K_1 \in \R^{n_1 \times d}, K_2 \in \R^{n_2 \times d}$, we define matrix $K := K_1 \oslash K_2 \in \R^{n_1 n_2 \times d}$ as follows
\begin{align*}
    K_{i_1 + (i_2-1)n_1 , j } := (K_1)_{i_1,j} \cdot (K_2)_{i_2,j}, ~~~\forall i_1 \in [n_1], i_2 \in [n_2], j \in [d].
\end{align*}
\end{definition}

\begin{definition}[$\ominus$ row-wise Kronecker product]\label{def:tensor_ominus}
Given matrices $K_1 \in \R^{n \times d_1}, K_2 \in \R^{n \times d_2}$, we define matrix $K := K_1 \ominus K_2 \in \R^{n \times d_1 d_2}$   
as follows
\begin{align*}
    K_{i , j_1 + (j_2-1) d_1 } := (K_1)_{i,j_1} \cdot (K_2)_{i,j_2}, ~~~\forall i \in [n], j_1 \in [d_1], j_2 \in [d_2].
\end{align*}
\end{definition}

Fact~\ref{fact:cdot_oslash_swap_informal} indicates that the order of tensor operation and matrix multiplication can be swapped, enabling computation in the lower dimension first to reduce complexity.

\begin{fact}[Swap rule for tensor and matrix product]\label{fact:cdot_oslash_swap_informal}
    Let $W_1, W_2 \in \R^{d \times d}, A_1, A_2 \in \R^{n\times d}$. We have
    \begin{align*}
        \underbrace{(A_1 \otimes A_2)}_{{n^2\times d^2}} \cdot \underbrace{(W_1 \oslash W_2)}_{{d^2\times d}} = \underbrace{(A_1 \cdot W_1)}_{{n \times d}} \oslash \underbrace{(A_2 \cdot W_2)}_{{n \times d}} .
    \end{align*}
\end{fact}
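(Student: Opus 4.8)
# Proof Proposal for Fact~\ref{fact:cdot_oslash_swap_informal}

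The plan is to verify the claimed identity entrywise, which is the natural strategy for any statement relating Kronecker-type products to ordinary matrix multiplication. Both sides are matrices of size $n^2 \times d$, so it suffices to fix an arbitrary row index and column index on each side and check that the scalar entries agree. Using the indexing conventions from Definition~\ref{def:tensor_otimes} (for $\otimes$) and Definition~\ref{def:tensor_oslash} (for $\oslash$), a row of an $n^2$-row matrix is addressed by a pair $(i_1, i_2) \in [n] \times [n]$ via the flattened index $i_1 + (i_2 - 1)n$, and a column of a $d^2$-column matrix is addressed by a pair $(j_1, j_2) \in [d] \times [d]$ via $j_1 + (j_2 - 1)d$.

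First I would expand the left-hand side. Write $B := W_1 \oslash W_2 \in \R^{d^2 \times d}$, so that by Definition~\ref{def:tensor_oslash}, $B_{j_1 + (j_2-1)d,\, \ell} = (W_1)_{j_1,\ell}(W_2)_{j_2,\ell}$ for $\ell \in [d]$. Write $C := A_1 \otimes A_2 \in \R^{n^2 \times d^2}$, so by Definition~\ref{def:tensor_otimes}, $C_{i_1 + (i_2-1)n,\, j_1 + (j_2-1)d} = (A_1)_{i_1,j_1}(A_2)_{i_2,j_2}$. Then the $(i_1 + (i_2-1)n,\, \ell)$ entry of $C \cdot B$ is the sum over the $d^2$ inner index values, which factors as a double sum over $j_1 \in [d]$ and $j_2 \in [d]$:
\begin{align*}
(C \cdot B)_{i_1 + (i_2-1)n,\, \ell}
&= \sum_{j_1 \in [d]} \sum_{j_2 \in [d]} (A_1)_{i_1,j_1}(A_2)_{i_2,j_2} \cdot (W_1)_{j_1,\ell}(W_2)_{j_2,\ell} \\
&= \Big( \sum_{j_1 \in [d]} (A_1)_{i_1,j_1}(W_1)_{j_1,\ell} \Big) \Big( \sum_{j_2 \in [d]} (A_2)_{i_2,j_2}(W_2)_{j_2,\ell} \Big) \\
&= (A_1 W_1)_{i_1,\ell} \cdot (A_2 W_2)_{i_2,\ell}.
\end{align*}
The key algebraic move is that the summand is a product of a term depending only on $j_1$ and a term depending only on $j_2$, so the double sum separates into a product of two single sums, each of which is exactly an entry of an ordinary matrix product $A_1 W_1$ respectively $A_2 W_2$.

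Next I would expand the right-hand side: by Definition~\ref{def:tensor_oslash} applied to $(A_1 W_1) \oslash (A_2 W_2) \in \R^{n^2 \times d}$, its $(i_1 + (i_2-1)n,\, \ell)$ entry is precisely $(A_1 W_1)_{i_1,\ell} \cdot (A_2 W_2)_{i_2,\ell}$, matching the expression just derived. Since the row pair $(i_1,i_2)$ and column $\ell$ were arbitrary, and the flattening map $(i_1,i_2) \mapsto i_1 + (i_2-1)n$ is a bijection from $[n]\times[n]$ onto $[n^2]$, the two matrices agree in every entry, establishing the identity. I do not anticipate any real obstacle here — the only point requiring mild care is bookkeeping the flattened indices consistently on both sides (in particular matching the $\otimes$ convention for the column index of $A_1 \otimes A_2$ with the $\oslash$ convention for the row index of $W_1 \oslash W_2$, so that the $d^2$ inner dimension is summed correctly), and confirming that the separability of the double sum is legitimate, which it is because the ground field is commutative.
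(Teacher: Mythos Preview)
Your proposal is correct and follows essentially the same approach as the paper's own proof: both verify the identity entrywise by expanding the left-hand side via the definitions of $\otimes$ and $\oslash$, factoring the resulting double sum over the $d^2$ inner index into a product of two single sums (each an entry of $A_1 W_1$ or $A_2 W_2$), and then matching this against the $\oslash$ definition on the right-hand side. The only cosmetic differences are the choice of index names and your explicit introduction of the intermediate matrices $B$ and $C$.
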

\begin{proof}
For any $i_1, i_2 \in [n], j \in [d]$, we have 
\begin{align*}
    & ~ ((A_1 \otimes A_2) \cdot (W_1 \oslash W_2))_{i_1+(i_2-1)n, j} \\ 
    = & ~ \sum_{k_1 \in [d], k_2 \in [d]}  (A_1 \otimes A_2)_{i_1+(i_2-1)n, k_1+(k_2-1)d} (W_1 \oslash W_2)_{k_1+(k_2-1)d, j}\\
    = & ~ \sum_{k_1 \in [d], k_2 \in [d]}  (A_1 \otimes A_2)_{i_1+(i_2-1)n, k_1+(k_2-1)d} \cdot (W_1)_{k_1, j} \cdot (W_2)_{k_2, j}\\
    = & ~ \sum_{k_1 \in [d], k_2 \in [d]}  (A_1)_{i_1,k_1} \cdot (A_2)_{i_2,k_2} \cdot (W_1)_{k_1, j} \cdot (W_2)_{k_2, j}\\
    = & ~ ( \sum_{k_1 \in [d]}  (A_1)_{i_1,k_1}   \cdot (W_1)_{k_1, j} ) \cdot ( \sum_{k_2 \in [d]} (A_2)_{i_2,k_2} \cdot (W_2)_{k_2, j} ) \\
    = & ~  (A_1 \cdot W_1)_{i_1, j}  \cdot (A_2 \cdot W_2)_{i_2, j} \\
    = & ~ ((A_1 \cdot W_1) \oslash (A_2 \cdot W_2))_{i_1+(i_2-1)n, j},
\end{align*}
where the initial step involves the application of matrix multiplication, followed by the utilization of Definition~\ref{def:tensor_oslash} in the second step. Subsequently, the third step employs Definition~\ref{def:tensor_otimes}, while the fourth step simplifies the expression through fundamental algebraic principles. The fifth step re-engages matrix multiplication, and the concluding step leverages Definition~\ref{def:tensor_oslash} once more.
\end{proof}

\subsection{Transformer Block}\label{sec:pre:trans_block}
With the mathematical foundation in place, this section outlines the key components of the $\rope$-based tensor attention Transformers architecture, starting with the softmax operation, a fundamental element of Transformer.

\begin{definition}[Softmax function]
Noted $z \in \F_p^{n}$. The
$\mathsf{Softmax}$ function $: \F_p^{n} \to \F_p^{n}$ is formally given by:
\begin{align*}
\mathsf{Softmax}(z):= \frac{\exp(z)}{\langle \exp(z), {\bf 1}_n \rangle}.
\end{align*}
\end{definition}

One of the pivotal advancements in contemporary Transformer architectures is $\rope$, which employs a rotation matrix as its foundation:

\begin{definition}[Rotation matrix block]\label{def:rotated_matrix}
For an input sequence of length $n$, embedding dimension $d$, and parameter $\theta \in \F_p$, the rotation matrix is constructed as follows:
\begin{align*}
    R(\theta) := \begin{bmatrix}
        \cos \theta &  -\sin \theta\\
        \sin \theta & \cos \theta
    \end{bmatrix}.
\end{align*}
\end{definition}

This fundamental rotation matrix is generalized to encode the relative positions within a sequence, facilitating the embedding of positional context.

\begin{definition}[Rotation matrix]\label{def:token_position_Rotated_matrix}
Noted $j$ represents position index within input sequence and $i$ denotes token index. The relative rotation matrix is then expressed as:
\begin{align*}
    R_{j-i}
    = \begin{bmatrix}
        R((j-i) \theta_1) & 0 & \cdots & 0 \\
        0 & R ((j-i) \theta_2) & \cdots & 0 \\
        \vdots & \vdots & \ddots & \vdots \\
       0 & 0 & \cdots & R ((j-i)\theta_{d/2}) \\
    \end{bmatrix},
\end{align*}
where the angular frequencies $\theta_1, \cdots, \theta_{d/2}$ are all predefined. More about selecting \( \theta \), consult Equation (15) from \cite{sal+24}.
\end{definition}

Leveraging rotation matrices mentioned above, $\rope$-based tensor attention embeds positional relation intrinsically within the computational process of attention. Now, we are about to introduce the $\rope$-based tensor attention. First, we introduce the parameters and input. 

\begin{definition}[Input and weight matrix]\label{def:input}
    We define the input sequence as $X \in \R^{n\times d}$ and the key, query, and value weight matrix as $W_{K_1}, W_{K_2}, W_Q, W_{V_1}, W_{V_2} \in \R^{d\times d}$. Then, we define the key, query, and value matrix as $K_1:=X W_{K_1} \in  \R^{n \times d}$, $K_2:=X W_{K_2} \in  \R^{n \times d}$, $Q:=X W_Q \in  \R^{n \times d}$, $V_1:=X W_{V_1} \in  \R^{n \times d}, V_2:=X W_{V_2} \in  \R^{n \times d}$. 
\end{definition}

Then, based on Definition~\ref{def:tensor_oslash}, we define $\rope$-based tensor attention matrix in the following way.

\begin{definition}[$\rope$-based tensor attention]\label{def:rope_attn_matrix}
As we defined in  Definition~\ref{def:token_position_Rotated_matrix} and~\ref{def:input}. We compute the new attention matrix $A \in \F_{p}^{n \times n^2}$ by, 
\begin{align*}
A_{j_1,j_2 + (j_3 - 1)d} := ( \exp(\underbrace{Q_{j_1,*}}_{1 \times d} \underbrace{R_{j_1,j_2 + (j_3 - 1)d}}_{d \times d^2} \underbrace{(K_{*,j_2 + (j_3 - 1)d})^\top}_{d^2 \times 1} /d) )_{j_1,j_2 + (j_3 - 1)d}
\end{align*}
where $R_{j_1,j_2 + (j_3 - 1)d} = \underbrace{R_{j_1-j_2}}_{d \times d} \ominus \underbrace{R_{j_1-j_3}}_{d \times d} \in \F_p^{n \times n}$, $K = \underbrace{K_1}_{n \times d} \otimes \underbrace{K_2}_{n \times d} \in \F_p^{n^2 \times d^2}$. 

\end{definition}

\begin{definition}[Single $\rope$-based tensor attention layer, Definition 7 in~\cite{sht24}, Definition 1.1 in~\cite{as24_iclr}, Definition 3.8 in~\cite{lssz24_tensor}]\label{def:single_rope_tensor_attention}
Given input matrices $Q, K_1, K_2, $ $V_1, $ $V_2 \in \F_p^{n \times d}$, $R \in \F_p^{d \times d}$, as Definition~\ref{def:rope_attn_matrix}, we compute the $i$-th $\rope$-based tensor attention layer $\mathsf{Attn}_i$ as 
\begin{align*}
\mathsf{Attn}_i (X) := \underbrace{D^{-1}}_{n \times n} \underbrace{A}_{n \times n^2} (\underbrace{X}_{n \times d} \otimes \underbrace{X}_{n \times d})(\underbrace{W_{V_{1}}}_{d \times d} \oslash \underbrace{W_{V_{2}}}_{d \times d})
\end{align*}
by applying Fact~\ref{fact:cdot_oslash_swap_informal}, we finally define the $i$-th $\rope$ tensor attention layer $\mathsf{Attn}_i$ as 
\begin{align*}
\mathsf{Attn}_i (X) := \underbrace{D^{-1}}_{n \times n} \underbrace{A}_{n \times n^2} \underbrace{V}_{n^2 \times d}
\end{align*}
where $D := \diag(\underbrace{A}_{n \times n^2} \underbrace{{\bf 1}_{n^2}}_{n^2 \times 1}) \in \F_p^{n \times n}$, and $V = \underbrace{V_1}_{n \times d} \oslash \underbrace{V_2}_{n \times d} \in \F_p^{n^2 \times d}$

\end{definition}

Then, we introduce a single tensor attention layer. 

\begin{definition}[Single tensor attention layer, Definition 7 in~\cite{sht24}, Definition 1.1 in~\cite{as24_iclr}, Definition 3.5 in~\cite{lssz24_tensor}]\label{def:single_tensor_attention}
Given input matrices $Q, K_1, K_2, $ $V_1, $ $V_2 \in \F_p^{n \times d}$, compute the following matrix 
\begin{align*}
\mathsf{Attn}_i (X) := \underbrace{D^{-1}}_{n \times n} \underbrace{A}_{n \times n^2} \underbrace{V}_{n^2 \times d}.
\end{align*}
where (1) $A := \exp(\underbrace{Q}_{n \times d} \underbrace{K^\top}_{d \times n^2} /d) \in \F_p^{n \times n^2}$ and $K := \underbrace{K_1}_{n \times d} \oslash \underbrace{K_2}_{n \times d} \in \F_p^{n^2 \times d}$, (2)  $D := \diag(\underbrace{A}_{n \times n^2} \underbrace{{\bf 1}_{n^2})}_{n^2 \times 1} \in \F_p^{n \times n}$, and (3) $V := \underbrace{V_1}_{n \times d} \oslash \underbrace{V_2}_{n \times d} \in \F_p^{n^2 \times d}$.
\end{definition}

Next, we can also integrate multi-layer attention and the additional mechanism mentioned above to construct a comprehensive Transformer. 

\begin{definition}[Multiple layer tensor attention Transformer] \label{def:multi_layer_self_attn}
The number of Transformer's layers is denoted by $m$. In the $i$-th Transformer layer, let $g_i$ signify components distinct from self-attention, where $g_i: \F_p^{n \times d} \to \F_p^{n \times d}$, each $i \in [m]$.
And $\mathsf{Attn}_i$ represent i-th layer attention mechanism(as defined in Definition~\ref{def:single_rope_tensor_attention} and Definition~\ref{def:single_tensor_attention}).
Given an input data matrix $X \in \F_p^{n \times d}$, an $m$-layer Transformer $\mathsf{TF}: \F_p^{n \times d} \to \F_p^{n \times d}$ is formally defined as:
\begin{align*}
\mathsf{TF}(X) := g_m \circ \mathsf{Attn}_m \circ \dots \circ g_1 \circ \mathsf{Attn}_1 \circ g_0 (X) ~~ \in \F{p}^{n \times d},
\end{align*}
where $\circ$ denotes the composition of functions.
\end{definition}

Subsequently, we define two categories of $g_i$ functions. Start with the layer normalization.

\begin{definition}[Layer normalization]\label{def:layer_norm}
Let $X \in \F_p^{n \times d}$ be the input data matrix, and let $i \in [n]$. The LN layer is formulated as:
\begin{align*}
g^{\mathrm{LN}} (X)_{i,*}  :=  \frac{X_{i,*} - \mu_i}{\sqrt{\sigma_i^2}},
\end{align*}
where $\mu_i := \sum_{j=1}^d \frac{X_{i,j}}{d} $, and $\sigma_i^2 := \sum_{j = 1}^d \frac{(X_{i,j} - \mu_i)^2 }{d}$.
\end{definition}

The second category is the multilayer perceptron.

\begin{definition}[Multilayer perceptron]\label{def:mlp}
Let $X \in \F_p^{n \times d}$ be the input data matrix, and let $i \in [n]$. The MLP layer is described as:
\begin{align*}
    g^{\mathrm{MLP}}(X)_{i,*} := \underbrace{W}_{d \times d} \cdot \underbrace{X_{i,*}}_{d \times 1} + \underbrace{b}_{d \times 1}.
\end{align*}
\end{definition}

The foundation of modern Transformer is built upon these layered architectures, which integrate float point computations, attention, and rotation matrix to an exceptionally efficient framework for sequential computation. 
\section{Complexity of Tensor Attention Transformer}\label{sec:complexity_each_step}

We now formally turn our attention to investigating the circuit complexity of the tensor attention layer and the multi-layer tensor attention Transformer, emphasizing their computability within the complexity class $\TC^0$. Section~\ref{sec:compute_matrix_product} delves into matrix operations. Section~\ref{sec:compute_tensor_attention_layer} addresses the computation of a single tensor attention layer. Section~\ref{sec:compute_multi_tensor_attention_layer} provides an in-depth examination of the entire tensor attention mechanism. Lastly, Section~\ref{sec:circuit_complexity_bound_tensor_attention} presents our principal findings regarding the circuit complexity bounds for the tensor attention Transformer. These results establish the foundation for the main theorem concerning Transformer expressiveness.

\subsection{Matrix Operations}\label{sec:compute_matrix_product}
We demonstrate that fundamental matrix multiplication is efficiently evaluatable within $\TC^0$.

\begin{lemma}[Matrix multiplication in $\TC^0$, Lemma~4.2 from~\cite{cll+24_rope}]\label{lem:matrix_multi}  
Let $A \in \F_p^{n_1 \times d}$ and $B \in \F_p^{d \times n_2}$ represent matrices. Under the conditions that $p \leq \poly(n)$, $n_1, n_2 \leq \poly(n)$, and $d \leq n$, the product $AB$ is evaluatable via $\poly(n)$ size uniform threshold circuit with $(d_\mathrm{std} + d_\oplus)$ depth. 
\end{lemma}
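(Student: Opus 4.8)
\textbf{Proof proposal for Lemma~\ref{lem:matrix_multi}.}

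The plan is to express each entry of the product $AB$ as an iterated sum of products of float point numbers, and then to bound the circuit depth by invoking the float point primitives from Lemma~\ref{lem:float_operations_TC}. First I would write, for each $i \in [n_1]$ and $j \in [n_2]$,
\begin{align*}
(AB)_{i,j} = \sum_{k=1}^{d} A_{i,k} \cdot B_{k,j},
\end{align*}
so that computing the whole matrix $AB$ amounts to computing $n_1 n_2 \le \poly(n)$ such entries in parallel, and each entry requires $d \le n$ multiplications of $p$-bit float point numbers followed by one iterated addition of the $d$ resulting products.

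Next I would handle the two stages separately. For the multiplication stage, by Part 1 of Lemma~\ref{lem:float_operations_TC} each product $A_{i,k} \cdot B_{k,j}$ of two $p$-bit float point numbers is computable by a constant-depth, $\poly(n)$-size uniform threshold circuit of depth at most $d_\mathrm{std}$; since $p \le \poly(n)$ and there are at most $n_1 n_2 d \le \poly(n)$ such products, we can compute all of them in parallel within one layer of depth $d_\mathrm{std}$ and total size $\poly(n)$. For the summation stage, by Part 3 of Lemma~\ref{lem:float_operations_TC} the iterated addition (with rounding) of the $d \le n$ products forming a single entry is computable by a constant-depth, $\poly(n)$-size uniform threshold circuit of depth $d_\oplus$; again running $n_1 n_2$ of these in parallel keeps the size $\poly(n)$. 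Composing the two stages gives a uniform threshold circuit of depth $d_\mathrm{std} + d_\oplus$ and size $\poly(n)$ computing $AB$, which is the claimed bound.

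The remaining point to check is uniformity: I would note that the index arithmetic needed to wire the copies of the float point subcircuits together — mapping the triple $(i,k,j)$ to the appropriate input and output positions — involves only addition and multiplication of $O(\log n)$-bit integers, which a $\dlogtime$ random-access Turing machine can carry out, so the composed circuit family is $\dlogtime$-uniform whenever the subcircuit families from Lemma~\ref{lem:float_operations_TC} are. There is no real obstacle here; the only mild subtlety is being careful that the condition $d \le n$ (rather than merely $d \le \poly(n)$) is what the iterated-addition primitive in Part 3 is stated for, so the bound on the number of summands per entry must respect exactly that hypothesis, and the parallel-composition bookkeeping must not blow the size past $\poly(n)$ — both of which are immediate given the stated conditions $p, n_1, n_2 \le \poly(n)$ and $d \le n$.
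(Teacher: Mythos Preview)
Your proposal is correct and follows the natural approach: compute all $n_1 n_2 d$ pairwise products in parallel at depth $d_\mathrm{std}$ via Part~1 of Lemma~\ref{lem:float_operations_TC}, then collapse each row of $d$ products with iterated addition at depth $d_\oplus$ via Part~3. The paper does not supply its own proof of this lemma --- it is imported directly from~\cite{cll+24_rope} --- but your argument is exactly the standard one and matches how the surrounding lemmas in Section~\ref{sec:compute_matrix_product} (e.g., Lemma~\ref{lem:kronecker_product}) are handled.
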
 

\begin{lemma}[Kronecker product in $\TC^0$]\label{lem:kronecker_product}
Let $A \in \F_p^{n_1 \times d}$ and $B \in \F_p^{d \times n_2}$ represent matrices.  If $p \leq \poly(n)$, $n_1, n_2 \leq \poly(n)$, and $d \leq n$, the Kronecker product $A \otimes B$ can be evaluated by a $\poly(n)$ size uniform threshold circuit with $d_\mathrm{std}$ depth.  
\end{lemma}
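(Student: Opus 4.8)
The plan is to argue directly from the entrywise definition of the Kronecker product (Definition~\ref{def:tensor_otimes}). Recall that for $A \in \F_p^{n_1 \times d}$ and $B \in \F_p^{d \times n_2}$, the matrix $C := A \otimes B \in \F_p^{n_1 d \times d n_2}$ has entries $C_{i_1 + (i_2-1)n_1,\, j_1 + (j_2-1)d} = A_{i_1,j_1} \cdot B_{i_2,j_2}$ for all $i_1 \in [n_1], i_2 \in [d], j_1 \in [d], j_2 \in [n_2]$. So each output entry of $C$ is a \emph{single} product of two $p$-bit float point numbers, one drawn from $A$ and one from $B$, with indices determined by simple integer arithmetic (division with remainder) on the output coordinates.

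The key steps, in order, are: (1) Observe that the total number of output entries is $(n_1 d)(d n_2) \le \poly(n)$ under the hypotheses $n_1, n_2 \le \poly(n)$ and $d \le n$, so a circuit with one ``block'' per output entry still has $\poly(n)$ size. (2) For a fixed output coordinate $(a,b)$, the corresponding source indices $i_1, i_2, j_1, j_2$ are obtained by computing quotients and remainders of $a-1$ by $n_1$ and of $b-1$ by $d$; since $n_1, d \le \poly(n)$ these are computable by a uniform $\TC^0$ (indeed $\AC^0$) circuit, and in the $\dlogtime$-uniform setting this index bookkeeping is exactly what the uniformity machine handles when it writes down the circuit, so it contributes no depth. (3) Apply Part~1 of Lemma~\ref{lem:float_operations_TC}: the single multiplication $A_{i_1,j_1} \times B_{i_2,j_2}$ of two $p$-bit floats is computable by a constant-depth $\poly(n)$-size uniform threshold circuit of depth at most $d_\mathrm{std}$. (4) Run all these per-entry multiplications in parallel; depths do not add across independent parallel blocks, so the overall depth is $d_\mathrm{std}$ and the overall size is $\poly(n)$. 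Conclude uniformity by noting the wiring pattern is a fixed, easily-describable function of the index arithmetic.

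I do not expect a genuine obstacle here — unlike ordinary matrix multiplication (Lemma~\ref{lem:matrix_multi}), there is no summation, hence no need for iterated addition and no $d_\oplus$ term in the depth. The only mild subtlety worth a sentence is making sure the index-decoding arithmetic genuinely lives inside the allotted resources: one should note that quotient/remainder by a $\poly(n)$-bounded modulus, and the comparisons needed to identify which block an output coordinate belongs to, are all in $\dlogtime$-uniform $\TC^0$ (this is standard, and is implicitly used already in Lemma~\ref{lem:matrix_multi} and Fact~\ref{fact:cdot_oslash_swap_informal}), so it can be folded into the uniform construction without increasing the stated depth $d_\mathrm{std}$.
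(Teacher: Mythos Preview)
Your proposal is correct and follows essentially the same approach as the paper: observe that each entry of $A \otimes B$ is a single float multiplication (Part~1 of Lemma~\ref{lem:float_operations_TC}), run all of them in parallel, and conclude depth $d_\mathrm{std}$ and size $\poly(n)$. The paper's proof is terser and does not explicitly address the index-decoding/uniformity bookkeeping you discuss, but the core argument is identical.
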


\begin{proof}
Each product $(A)_{i_1, j_1} \cdot (B)_{i_2, j_2}$ computes the entry $(A \otimes B)_{i_1 + (i_2 - 1)n_1 , j_1 + (j_2 - 1)d}$, according to Part 1 of Lemma~\ref{lem:float_operations_TC}. Since the computations for distinct index pairs $(i_1, j_1)$ and $(i_2, j_2)$ are independent, they can be performed concurrently, resulting in a total depth of $d_\mathrm{std}$ for all computations.  

The circuit size is polynomial in $n$, as each operation uses a polynomial-sized circuit, and $n_1, n_2, d \leq \poly(n)$.  

Therefore, the Kronecker product $A \otimes B$ is evaluatable by a $\poly(n)$ size uniform threshold circuit with $d_\mathrm{std}$ depth.

This concludes the proof.  
\end{proof}

\begin{lemma}[Column-wise Kronecker Product in $\TC^0$]\label{lem:column_kronecker_product}
Let matrices $A \in \F_p^{n_1 \times d}$ and $B \in \F_p^{n_2 \times d}$ be given. If $p \leq \poly(n)$, $n_1, n_2 \leq \poly(n)$, and $d \leq n$, then the column-wise Kronecker product $A \oslash B$ is evaluatable by a $\poly(n)$ size uniform threshold circuit with depth $d_\mathrm{std}$.  
\end{lemma}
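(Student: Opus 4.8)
The plan is to mirror the argument used for the ordinary Kronecker product in Lemma~\ref{lem:kronecker_product}, since the column-wise variant $\oslash$ is structurally the same: each output entry is a single product of two input entries, and the only difference is the index bookkeeping. Concretely, by Definition~\ref{def:tensor_oslash}, for $i_1 \in [n_1]$, $i_2 \in [n_2]$, $j \in [d]$ we have $(A \oslash B)_{i_1 + (i_2-1)n_1, j} = (A)_{i_1,j} \cdot (B)_{i_2,j}$, so every entry of the $n_1 n_2 \times d$ output matrix is obtained by one $p$-bit float multiplication of an entry of $A$ with an entry of $B$.

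First I would invoke Part 1 of Lemma~\ref{lem:float_operations_TC} to note that each such multiplication is computable by a uniform threshold circuit of constant depth $d_\mathrm{std}$ and $\poly(n)$ size. Second, I would observe that the computations for the distinct index triples $(i_1, i_2, j)$ are mutually independent, hence they can be carried out in parallel, so the total depth remains $d_\mathrm{std}$ rather than accumulating. Third, for the size bound: there are $n_1 n_2 d \le \poly(n)$ output entries (using $n_1, n_2 \le \poly(n)$ and $d \le n$), each computed by a $\poly(n)$-size subcircuit, so the overall circuit has $\poly(n)$ size. Finally, I would note that the index map $(i_1,i_2,j) \mapsto i_1 + (i_2-1)n_1$ is a simple arithmetic function computable by the uniformity machine, so the resulting circuit family is $\dlogtime$-uniform. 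Assembling these observations yields the claim.

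There is essentially no hard part here; the statement is a routine parallelization argument and the only things to be careful about are (i) getting the index arithmetic of Definition~\ref{def:tensor_oslash} right so that the circuit is genuinely uniform, and (ii) confirming that $d \le n$ (and $n_1, n_2 \le \poly(n)$) indeed keeps the number of output entries polynomial, so that the parallel-multiplication circuit stays $\poly(n)$ in size. The proof will be a short three-or-four-sentence argument essentially identical in shape to that of Lemma~\ref{lem:kronecker_product}, concluding that $A \oslash B$ is evaluatable by a $\poly(n)$-size uniform threshold circuit of depth $d_\mathrm{std}$.
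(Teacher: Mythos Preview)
Your proposal is correct and matches the paper's proof essentially line for line: the paper likewise notes the result follows from Lemma~\ref{lem:kronecker_product}, invokes Lemma~\ref{lem:float_operations_TC} to compute each entry $(A)_{i_1,j}\cdot(B)_{i_2,j}$ in depth $d_\mathrm{std}$, observes the computations are independent and hence parallelizable, and concludes with the $\poly(n)$ size bound from $n_1,n_2,d \le \poly(n)$. Your added remark about the index map and $\dlogtime$-uniformity is a bit more explicit than the paper, but the underlying argument is identical.
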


\begin{proof}
This result directly follows from Lemma~\ref{lem:kronecker_product}. By applying Lemma~\ref{lem:float_operations_TC}, the product $(A)_{i_1, j} \cdot (B)_{i_2, j}$ for $i_1 \in [n_1]$, $i_2 \in [n_2]$, and $j \in [d]$ computes the entry $(A \oslash B)_{i_1 + (i_2-1)n_1 , j }$ using a uniform threshold circuit with depth $d_\mathrm{std}$. Since these computations are independent for distinct values of $(i_1, i_2)$, they can be evaluated concurrently, resulting in a circuit depth of $d_\mathrm{std}$.

The circuit size remains polynomial in $n$ because $n_1, n_2, d \leq \poly(n)$ and every operation utilizes a polynomial-sized circuit.

Thus, the column-wise Kronecker product $A \oslash B$ can be evaluated by a $\poly(n)$ size depth $d_\mathrm{std}$ uniform threshold circuit.

This concludes the proof.  
\end{proof}

\begin{lemma}[Row-wise Kronecker Product Computation in $\TC^0$]\label{lem:row_kronecker_product}
Let $A \in \F_p^{d \times n_1}$ and $B \in \F_p^{d \times n_2}$ be matrices, with the conditions $p \leq \poly(n)$, $n_1, n_2 \leq \poly(n)$, and $d \leq n$. Then, a size $\poly(n)$ uniform threshold circuit with $d_\mathrm{std}$ depth can calculate the row-wise Kronecker product $A \ominus B$.
\end{lemma}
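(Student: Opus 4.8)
The plan is to mirror the arguments already given for Lemma~\ref{lem:kronecker_product} and Lemma~\ref{lem:column_kronecker_product}, since the row-wise Kronecker product is, entrywise, nothing more than a single product of two $p$-bit float point numbers. First I would unwind Definition~\ref{def:tensor_ominus}: for $A \in \F_p^{d \times n_1}$ and $B \in \F_p^{d \times n_2}$, the entry $(A \ominus B)_{i,\, j_1 + (j_2-1)n_1}$ equals $A_{i,j_1} \cdot B_{i,j_2}$ for every $i \in [d]$, $j_1 \in [n_1]$, $j_2 \in [n_2]$. By Part 1 of Lemma~\ref{lem:float_operations_TC}, under the hypothesis $p \leq \poly(n)$ each such multiplication is computed by a uniform threshold circuit of $\poly(n)$ size and depth $d_\mathrm{std}$.

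Next I would note that the computations for distinct index triples $(i, j_1, j_2)$ are independent, so all $d\, n_1 n_2$ of them can be laid out side by side and evaluated in parallel, keeping the overall depth at $d_\mathrm{std}$. The total size is a sum of $d\, n_1 n_2$ polynomial-size subcircuits; since $n_1, n_2 \leq \poly(n)$ and $d \leq n$, this product is $\poly(n)$, so the whole circuit has $\poly(n)$ size. Uniformity is preserved because the output index map $j_1 + (j_2-1)n_1$ and the routing of wires from the appropriate entries of $A$ and $B$ into each multiplication subcircuit are straightforwardly $\dlogtime$-computable.

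There is essentially no genuine obstacle in this lemma; the only thing requiring a moment of care is the index bookkeeping from Definition~\ref{def:tensor_ominus} and verifying that the number of output entries $d\, n_1 n_2$ stays polynomial under the stated size bounds, which it does. Assembling these observations yields the claimed $\poly(n)$-size, depth-$d_\mathrm{std}$ uniform threshold circuit computing $A \ominus B$, completing the proof.
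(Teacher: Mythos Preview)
Your proposal is correct and follows essentially the same approach as the paper's proof: unwind Definition~\ref{def:tensor_ominus} to see each output entry is a single float multiplication, invoke Lemma~\ref{lem:float_operations_TC} for depth $d_\mathrm{std}$, and parallelize across all index triples while noting the total count $d\,n_1 n_2$ is $\poly(n)$. If anything, your version is slightly more careful in explicitly addressing the $\dlogtime$-uniformity of the index routing, which the paper leaves implicit.
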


\begin{proof}
Similarly as Lemma~\ref{lem:column_kronecker_product}, according to Lemma~\ref{lem:float_operations_TC}, the product $(A)_{i, j_1} \cdot (B)_{i, j_2}$, for $j_1 \in [n_1]$, $j_2 \in [n_2]$, and $i \in [d]$, computes the entry $(A \ominus B)_{i, j_1 + (j_2-1)n_1}$ via a depth $d_\mathrm{std}$ uniform threshold circuit. These products, for distinct $(i_1, i_2)$, are evaluatable in parallel, allowing all necessary products $(A)_{i, j_1} \cdot (B)_{i, j_2}$ to be evaluated simultaneously within the depth $d_\mathrm{std}$.

The circuit size is polynomial in $n$ because $n_1, n_2, d \leq \poly(n)$, and each individual operation can be evaluated by a polynomial-sized circuit.

Hence, $\poly(n)$ size $d_\mathrm{std}$ depth uniform threshold circuit can calculate $A \ominus B$.

The proof is concluded. 
\end{proof}

\subsection{Single Tensor Attention Layer}\label{sec:compute_tensor_attention_layer}

Here, we examine the complexity of the single layer of the tensor attention.

\begin{lemma}[Complexity of Single Tensor Attention Layer in $\TC^0$]\label{lem:single_tensor_attention}
When $p \leq \poly(n)$, the attention  $\mathsf{Attn}$ in Definition~\ref{def:single_tensor_attention}, is evaluatable by a $\poly(n)$ size and $5 d_{\mathrm{std}} + 5 d_{\oplus} + d_{\mathrm{exp}}$ depth uniform threshold circuit.
\end{lemma}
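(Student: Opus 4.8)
The plan is to decompose the computation of $\mathsf{Attn}(X) = D^{-1} A V$ from Definition~\ref{def:single_tensor_attention} into a constant number of stages, bound the depth of each stage using the $\TC^0$ primitives already established, and then sum the depths. The key observation is that every stage is either a matrix product (Lemma~\ref{lem:matrix_multi}), a (column-wise) Kronecker product (Lemma~\ref{lem:column_kronecker_product}), an entrywise $\exp$ (Lemma~\ref{lem:exp}), or an entrywise division (Part~1 of Lemma~\ref{lem:float_operations_TC}), and that the intermediate dimensions $n^2$ and $d$ are all $\poly(n)$, so all the cited lemmas apply with $\poly(n)$ size.

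The concrete ordering I would follow: (1) Form $K = K_1 \oslash K_2 \in \F_p^{n^2 \times d}$ and $V = V_1 \oslash V_2 \in \F_p^{n^2 \times d}$ by Lemma~\ref{lem:column_kronecker_product}, costing depth $d_\mathrm{std}$ (these two are in parallel). (2) Compute the score matrix $Q K^\top / d \in \F_p^{n \times n^2}$: the matrix multiplication $Q K^\top$ costs $d_\mathrm{std} + d_\oplus$ by Lemma~\ref{lem:matrix_multi}, and the entrywise division by $d$ adds another $d_\mathrm{std}$. (3) Apply $\exp$ entrywise to obtain $A \in \F_p^{n \times n^2}$, costing $d_\exp$ by Lemma~\ref{lem:exp}. (4) Compute $D = \diag(A {\bf 1}_{n^2})$, i.e.\ the $n$ row-sums of $A$, which is an iterated addition of $n^2 = \poly(n)$ floats per row and costs $d_\oplus$ by Part~3 of Lemma~\ref{lem:float_operations_TC} (done in parallel across rows); simultaneously compute $A V \in \F_p^{n \times d}$ by Lemma~\ref{lem:matrix_multi} at depth $d_\mathrm{std} + d_\oplus$. (5) Compute $D^{-1}$ (entrywise reciprocal of the $n$ diagonal entries, depth $d_\mathrm{std}$) and then left-multiply $A V$ by $D^{-1}$, which is just scaling each row of $A V$ by one float, hence depth $d_\mathrm{std}$; one can fold the reciprocal and the scaling into a single $d_\mathrm{std}$ block, or account for them separately. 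Adding these contributions gives roughly $5 d_\mathrm{std} + 5 d_\oplus + d_\exp$ as claimed, with the precise bookkeeping assigning one $d_\mathrm{std}$ each to the two $\oslash$'s (which run in parallel, so one $d_\mathrm{std}$), the $/d$ scaling, the $D^{-1}$ reciprocal, and the final row-scaling, and one $(d_\mathrm{std}+d_\oplus)$ each to the three matrix-type products $QK^\top$, $A{\bf 1}_{n^2}$, $AV$; since parallel stages share depth one has to be slightly careful, but the stated bound is a valid (not necessarily tight) upper bound. Throughout, the circuit size stays $\poly(n)$ because each primitive is $\poly(n)$-sized and there are $\poly(n)$ entries to produce in parallel.

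I would also note two small points that need care: first, $d \le n$ is needed to invoke Lemma~\ref{lem:matrix_multi} and the Kronecker lemmas, which holds in our regime $d = O(n)$; second, the contraction dimension in $QK^\top$ is $d \le n$ and in $AV$ and $A{\bf 1}_{n^2}$ it is $n^2 \le \poly(n)$, so the iterated-addition lemma (Part~3 of Lemma~\ref{lem:float_operations_TC}) applies with $\poly(n)$ summands in each case. Uniformity is inherited because each cited construction is $\dlogtime$-uniform and we compose a constant number of them with $\poly(n)$-time-computable index bookkeeping (the reindexing $i_1 + (i_2-1)n$ is a simple arithmetic function of the wire indices).

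The main obstacle is not any single hard step — each stage is a routine application of a prior lemma — but rather the careful accounting of the depth constant and making sure the composition of constantly many $\TC^0$ circuits is itself uniform $\TC^0$ with the stated depth; in particular one must verify that reusing Lemma~\ref{lem:float_operations_TC}'s $d_\mathrm{std}$ for the several entrywise scaling/reciprocal steps, and $d_\oplus$ for the row-sum and the inner products inside the matrix products, exactly reproduces the coefficients $5d_\mathrm{std} + 5d_\oplus + d_\exp$ stated in the lemma. I would present the depth tally as an explicit list of the five $d_\mathrm{std}$-stages and five $d_\oplus$-stages (with the single $d_\exp$-stage) so the bound is transparent, and remark that it is an upper bound rather than necessarily optimal.
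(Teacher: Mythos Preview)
Your proposal is correct and follows essentially the same decomposition as the paper: form $K$ and $V$ via column-wise Kronecker products, compute $QK^\top/d$, apply $\exp$ entrywise, form the row-sums $D$, and finish with the two matrix-type products for $D^{-1}AV$, invoking Lemmas~\ref{lem:matrix_multi}, \ref{lem:column_kronecker_product}, \ref{lem:exp}, and Lemma~\ref{lem:float_operations_TC} at each stage. The only substantive difference is that the paper's proof also charges one extra $d_\mathrm{std}+d_\oplus$ stage at the very beginning for the projections $Q = X W_Q$, $K_i = X W_{K_i}$, $V_i = X W_{V_i}$ (cf.\ Definition~\ref{def:input}), which you omit because Definition~\ref{def:single_tensor_attention} presents $Q,K_1,K_2,V_1,V_2$ as inputs; correspondingly the paper's own tally actually lands at $6d_\mathrm{std}+5d_\oplus+d_\exp$ rather than the $5d_\mathrm{std}$ in the lemma statement, and it does not parallelize the $D$ and $AV$ computations as you do. These are bookkeeping differences only; the method is the same.
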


\begin{proof}
The matrix multiplications $Q := Z W_Q$, $K_1 := Z W_{K_1}$, and $K_2 := Z W_{K_2}$ can be evaluated in parallel with a size $\poly(n)$ depth $d_{\mathrm{std}} + d_{\oplus}$ uniform threshold circuit, as established in Lemma~\ref{lem:matrix_multi}.

As per Lemma~\ref{lem:column_kronecker_product}, the column-wise Kronecker product $V := V_1 \oslash V_2$ is evaluatable for $\poly(n)$ size uniform threshold circuit with $d_{\mathrm{std}}$ depth.

Using Lemma~\ref{lem:matrix_multi} and Part 1 of Lemma~\ref{cor:floor_float_TC}, the operation $Q K^\top / d$ is evaluatable by $\poly(n)$ size uniform threshold circuit with depth $2 d_{\mathrm{std}} + d_{\oplus}$.

According to Lemma~\ref{lem:exp}, the exponential function $exp()$ is evaluatable by $\poly(n)$ size uniform threshold circuit with depth $d_{\mathrm{exp}}$.

As per Part 3 of Lemma~\ref{lem:float_operations_TC}, $D := A {\bf{1}}n$ is evaluated with $\poly(n)$ size uniform threshold circuit of depth $d{\oplus}$.

Finally, the expression $D^{-1} A V$ is evaluated in parallel using $\poly(n)$ size uniform threshold circuit of $2(d_{\mathrm{std}} + d_{\oplus})$ depth, as shown in Lemma~\ref{lem:matrix_multi}.

The total depth required for computing $\mathsf{Attn}_i(X) := D^{-1} A V$ is therefore: 

\begin{align*}
6 d_{\mathrm{std}} + 5 d_{\oplus} + d_{\mathrm{exp}}.
\end{align*}
\end{proof}

\subsection{Multi-layer Tensor Attention}\label{sec:compute_multi_tensor_attention_layer}

This section analyzes the computation of multi-layer tensor attention in a Transformer.

\begin{lemma}[Computation of Multi-layer Tensor Attention Transformer in $\TC^0$]\label{lem:multi_layer_tensor_attention} Suppose that for every $i \in [m]$, the function $g_i$ in $\mathsf{TF}$ can be evaluated by a $\poly(n)$ size constant depth $d_g$  uniform threshold circuit. Assuming that $p \leq \poly(n)$, the $\rope$-based tensor attention $\mathsf{TF}$, as defined in Definition~\ref{def:multi_layer_self_attn}, is evaluatable by $\poly(n)$ size uniform threshold circuit of and depth $(m+1) d_g + 6 m d_{\mathrm{std}} + 5 m d_{\oplus} + m d_{\mathrm{exp}}$. \end{lemma}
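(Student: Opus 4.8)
The plan is to prove the claim by a straightforward induction on the number of layers $m$, using the single-layer complexity bound (Lemma~\ref{lem:single_tensor_attention}) together with the composition structure of $\mathsf{TF}$ from Definition~\ref{def:multi_layer_self_attn}. The key observation is that the composition $g_m \circ \mathsf{Attn}_m \circ \dots \circ g_1 \circ \mathsf{Attn}_1 \circ g_0$ is a sequence of $2m+1$ function applications: $m+1$ copies of the $g_i$ maps and $m$ copies of the attention layers $\mathsf{Attn}_i$. Each of these is computable by a $\poly(n)$-size uniform threshold circuit of bounded depth, and stacking them sequentially adds the depths while keeping the size polynomial (the composition of a constant number of $\poly(n)$-size circuits is still $\poly(n)$).

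First I would record the per-component depth budgets: by hypothesis each $g_i$ costs depth $d_g$, and by Lemma~\ref{lem:single_tensor_attention} each single tensor attention layer $\mathsf{Attn}_i$ costs depth $6 d_{\mathrm{std}} + 5 d_{\oplus} + d_{\mathrm{exp}}$ (I would note in passing that the lemma statement writes $5 d_{\mathrm{std}}$ but its proof totals $6 d_{\mathrm{std}}$; I use the value from the proof). Then I would argue that each layer's output is again an element of $\F_p^{n \times d}$ with $p \le \poly(n)$ and $d \le n$, so the hypotheses of Lemma~\ref{lem:single_tensor_attention} and of the $g_i$ assumption are preserved as we move up the stack — this is what licenses applying the single-layer bounds repeatedly. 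Summing over the $m+1$ applications of $g_i$ and the $m$ applications of $\mathsf{Attn}_i$ gives total depth
\begin{align*}
(m+1) d_g + m(6 d_{\mathrm{std}} + 5 d_{\oplus} + d_{\mathrm{exp}}) = (m+1) d_g + 6 m d_{\mathrm{std}} + 5 m d_{\oplus} + m d_{\mathrm{exp}},
\end{align*}
which is the claimed bound, and since $m = O(1)$ this is constant depth.

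For the size and uniformity bookkeeping, I would observe that composing circuits in series multiplies sizes by at most the number of stages, so with $m$ constant the overall size stays $\poly(n)$; uniformity is likewise preserved because each stage is $\dlogtime$-uniform and there are only constantly many stages, so the description of the composite circuit is still computable within the required time/space bound. I do not expect a genuine obstacle here — the main thing to be careful about is the invariant that the intermediate representations stay $p$-bit floats with $p \le \poly(n)$ and dimension $d \le n$, so that the cited lemmas actually apply at every layer; everything else is additive accounting over a constant number of stages.
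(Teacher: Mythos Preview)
Your proposal is correct and follows essentially the same route as the paper's proof: invoke Lemma~\ref{lem:single_tensor_attention} for each of the $m$ attention layers, the hypothesis for each of the $m+1$ functions $g_i$, and sum the depths of the sequential composition. The paper's argument is terser (it does not explicitly phrase things as an induction or discuss the intermediate-representation invariant and uniformity bookkeeping you mention), but the content is the same additive accounting you describe.
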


\begin{proof} 
By assumption, $\forall i \in [m]$, 
$g_i$ is evaluatable by a $\poly(n)$ size constant $d_g$ depth uniform threshold circuit.
From Lemma~\ref{lem:single_tensor_attention}, the attention operation $\mathsf{Attn}_i$ is evaluatable by $\poly(n)$ size uniform threshold circuit with depth $6 d_{\mathrm{std}} + 5 d_{\oplus} + d_{\mathrm{exp}}$. 

In order to compute $\mathsf{TF}(X)$, the functions $g_0, g_1, \ldots, g_m$ and $\mathsf{Attn}_1, \ldots, \mathsf{Attn}_m$ must be evaluated. Consequently, the overall depth is $(m+1) d_g + 6 m d_{\mathrm{std}} + 5 m d_{\oplus} + m d_{\mathrm{exp}}$, and the circuit size remains $\poly(n)$.

The proof is completed.
\end{proof}

\subsection{Circuit Complexity Bound of Tensor Attention}\label{sec:circuit_complexity_bound_tensor_attention}

The subsequent discussion focuses on presenting the main result regarding the circuit complexity bound for tensor attention Transformers.

\begin{theorem}[Circuit Complexity of Tensor Attention]\label{thm:main_result_tensor_attention_tc0}
Assume that for every $i \in [m]$, the function $g_i$ in $\mathsf{TF}$ is evaluatable by $\poly(n)$ size uniform threshold circuit of constant $d_g$ depth. As described in Definition~\ref{def:multi_layer_self_attn}, we can approximate the $\rope$-based tensor attention Transformer $\mathsf{TF}$ by a uniform $\TC^0$ circuit family, when $d \leq O(n)$, $p \leq \poly(n)$, and $m \leq O(1)$.
\end{theorem}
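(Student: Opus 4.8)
The plan is to assemble the theorem from the per-component lemmas already proven in Section~\ref{sec:complexity_each_step}, upgrading each ``$\poly(n)$ size, constant depth uniform threshold circuit'' statement to the unified claim that the whole Transformer lies in $\dlogtime$-uniform $\TC^0$. First I would invoke Lemma~\ref{lem:multi_layer_tensor_attention}: under the hypothesis that each $g_i$ is computable by a $\poly(n)$ size, depth-$d_g$ uniform threshold circuit, the $m$-layer tensor attention Transformer $\mathsf{TF}$ is computable by a uniform threshold circuit of size $\poly(n)$ and depth $(m+1)d_g + 6md_{\mathrm{std}} + 5md_{\oplus} + md_{\mathrm{exp}}$. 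The key observation is that each of $d_g, d_{\mathrm{std}}, d_{\oplus}, d_{\mathrm{exp}}$ is an absolute constant (by Lemma~\ref{lem:float_operations_TC}, Lemma~\ref{lem:exp}, and the hypothesis on $g_i$), and since $m \le O(1)$, the total depth is a finite sum of constants, hence $O(1)$. Combined with $\poly(n)$ size, this is exactly the definition of a $\TC^0$ circuit family.

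Next I would check that the side conditions needed by the component lemmas are all met under the theorem's hypotheses $d \le O(n)$, $p \le \poly(n)$, $m \le O(1)$. The matrix-multiplication lemma (Lemma~\ref{lem:matrix_multi}) and the Kronecker-product lemmas (Lemmas~\ref{lem:kronecker_product}, \ref{lem:column_kronecker_product}, \ref{lem:row_kronecker_product}) require $d \le n$ and that the relevant outer dimensions ($n$, $n^2$) are $\poly(n)$ — all satisfied, since the intermediate matrices appearing in tensor attention have dimensions at most $n^2 \times d^2 \le \poly(n)$. The precision condition $p \le \poly(n)$ is precisely what Lemma~\ref{lem:float_operations_TC}, Lemma~\ref{lem:exp}, and Corollary~\ref{cor:floor_float_TC} need. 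One should note the minor slack that $d\le O(n)$ rather than $d\le n$: this only inflates polynomial sizes by constant factors and does not change the asymptotics, so the lemmas still apply.

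Finally I would address uniformity: the circuit produced is not merely non-uniform $\TC^0$ but $\dlogtime$-uniform, because each component circuit (float operations, $\exp$, matrix product, Kronecker products) is itself $\dlogtime$-uniform by the cited results, and composing a constant number of $\dlogtime$-uniform circuits — with wiring between layers given by simple index arithmetic computable in $O(\log n)$ time on a random-access machine — preserves $\dlogtime$-uniformity. Assembling these three observations yields the claim that $\mathsf{TF}$ is approximable by a uniform $\TC^0$ circuit family.

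The main obstacle is essentially bookkeeping rather than a deep difficulty: one must carefully track that every intermediate object produced inside a tensor attention layer (the $n^2$-dimensional attention row, the Kronecker products $X \otimes X$ and $V_1 \oslash V_2$, the diagonal rescaling $D^{-1}$) stays within polynomial size and that the depth contributions telescope to a constant. The only genuinely subtle point to state clearly is the role of the $\TC^0 \ne \NC^1$ framing — here the theorem is unconditional (it is a pure upper bound), and the conditional phrasing in the contribution list refers to the downstream hardness consequences, not to this simulation result itself; I would make sure the proof does not spuriously invoke that assumption.
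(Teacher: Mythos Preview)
Your proposal is correct and follows essentially the same approach as the paper: invoke Lemma~\ref{lem:multi_layer_tensor_attention} to get the depth bound $(m+1)d_g + 6m d_{\mathrm{std}} + 5m d_{\oplus} + m d_{\mathrm{exp}}$, observe that with $m = O(1)$ this is $O(1)$, and conclude $\TC^0$ membership from constant depth and $\poly(n)$ size. Your write-up is in fact more careful than the paper's own proof (which is three lines), since you explicitly verify the side conditions on $d$, $p$, and intermediate matrix dimensions, address $\dlogtime$-uniformity of the composition, and correctly note that the result is an unconditional upper bound.
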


\begin{proof}
With constant $m$, and Lemma~\ref{lem:multi_layer_tensor_attention}, the depth of the circuit computing $\mathsf{TF}(X)$ is 
\begin{align*}
(m+1) d_g + 6 m d_{\mathrm{std}} + 5 m d_{\oplus} + m d_{\mathrm{exp}} = O(1),
\end{align*}
and the $\poly(n)$ circuit size. Thus, a uniform $\TC^0$ circuit family can simulate this computation.

This concludes the proof.
\end{proof}

Above Theorem~\ref{thm:main_result_tensor_attention_tc0}, we establish that, unless $\TC^0 = \NC^1$, a constant depth tensor attention with $\poly(n)$ size, and $\poly(n)$-precision can be approximated by a $\dlogtime$-uniform $\TC^0$ circuit family. While tensor attention Transformers exhibit strong empirical performance, this result indicates inherent limits in their expressivity when viewed through the framework of circuit complexity. These constraints are examined further in Section~\ref{sec:hardness}, in tandem with the analysis from Section~\ref{sec:complexity_each_step_rope}. 

\section{Complexity of \texorpdfstring{$\rope$}{}-based Tensor Attention Transformer}\label{sec:complexity_each_step_rope}

This section presents key results concerning the circuit complexity of fundamental operations within $\rope$-based tensor attention computations. Section~\ref{sec:approx:trig} investigates trigonometric functions, which play a crucial role in rotary position embeddings, while Section~\ref{sec:compute_rope_tensor_attention_matrix} focuses on the $\rope$-based tensor attention matrix computation. Section~\ref{sec:compute_single_rope_tensor_attention_layer} delves into the individual $\rope$-based tensor attention layer, whereas Section~\ref{sec:other_conponents} explores other components beyond the attention layer. In Section~\ref{sec:compute_tensor_attention_transformer}, the complete $\rope$-based tensor attention mechanism is detailed. Finally, Section~\ref{sec:circuit_complexity_bound} presents the primary results regarding the circuit complexity bounds for $\rope$-based tensor attention, forming the foundation for the essential theorem on {$\rope$}{}-based Tensor Attention Transformer expressiveness.

\subsection{Approximating Trigonometric Functions}\label{sec:approx:trig}

Here, we outline the efficient calculation of fundamental trigonometric functions that are critical for $\rope$ embeddings via threshold circuits. The next lemma plays a central role:

\begin{lemma}[Trigonometric Function Approximation in $\TC^0$, Lemma 4.1 in~\cite{cll+24_rope}]\label{lem:sin_cos} For any $p \leq \poly(n)$, the values of $\sin(x)$ and $\cos(x)$ for a float point number $x$ of $p$ bits with a relative error bounded by $2^{-p}$ are evaluatable by $\poly(n)$ size uniform threshold circuit with constant depth. Let $d_{\triangle}$ denote the maximum depth required to calculate both $\cos(x)$ and $\sin(x)$.
\end{lemma}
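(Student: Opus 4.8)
The plan is to realize $\sin$ and $\cos$ through a constant-depth, $\poly(n)$-size pipeline with three stages: (i) argument reduction modulo $2\pi$; (ii) a further reduction, via standard trigonometric identities, into a short interval; and (iii) evaluation of a truncated Taylor series. Each stage will be shown to be a $\poly(n)$-size uniform threshold circuit of constant depth by invoking the float-point primitives of Lemma~\ref{lem:float_operations_TC} and the floor operation of Corollary~\ref{cor:floor_float_TC}, and the final depth $d_\triangle$ will be the (constant) sum of the three stage depths.

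For stage (i), given a $p$-bit float $x$ I would hard-wire into the uniform circuit a $p'$-bit approximation $\widetilde{\pi}$ of $\pi$ with $p' = \poly(n)$ large enough to cover the exponent range that actually arises, then compute $q := \lfloor x \div (2\widetilde{\pi}) \rfloor$ with one division (Lemma~\ref{lem:float_operations_TC}, Part~1) and one floor (Corollary~\ref{cor:floor_float_TC}), and set $r := x - q \cdot (2\widetilde{\pi}) \in [0, 2\pi)$ with one multiplication and one addition; the precision of $\widetilde{\pi}$ is chosen so the error propagated into $r$ stays at most $2^{-\Theta(p)}$. For stage (ii), a constant number of comparisons (Lemma~\ref{lem:float_operations_TC}, Part~1) locates the quadrant of $r$ and, through the reflection/complement identities for $\sin$ and $\cos$, maps the task to evaluating $\sin$ or $\cos$ at an argument $y$ with $|y| \le \pi/4 < 1$; this is a constant number of additions and sign flips, hence constant depth. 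For stage (iii), I would use $\sin(y) = \sum_{j=0}^{k} (-1)^j y^{2j+1}/(2j+1)!$ (and the analogous series for $\cos$) with $k = O(p/\log p)$: each monomial $y^{2j+1}$ is an iterated product of at most $2p$ copies of $y$, computable in depth $d_\otimes$ by Part~2 of Lemma~\ref{lem:float_operations_TC}; the coefficients $1/(2j+1)!$ are hard-wired; and the $k+1$ terms are combined by a single iterated addition in depth $d_\oplus$ (Part~3). Since $|y| < 1$, the Lagrange remainder is at most $1/(2k+3)! \le 2^{-p}$ by the choice of $k$, and the $O(p)$ rounding errors accumulate to at most $\poly(p)\cdot 2^{-p}$, which I absorb by running all intermediate operations at precision $2p$ (still $\poly(n)$). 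Summing stage depths yields a constant $d_\triangle$, the size is $\poly(n)$, and composing the uniform constructions of the primitives keeps the family $\dlogtime$-uniform.

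The main obstacle is stage (i): a generic $p$-bit float can have magnitude roughly $2^{2^p}$, and faithful reduction modulo $2\pi$ at that magnitude would need about $2^p$ bits of $\pi$, which is not $\poly(n)$ when $p$ itself is polynomial. I expect to resolve this by observing that the arguments actually fed to $\sin$/$\cos$ in the $\rope$-based attention are bounded by $\poly(n)$ (a position offset in $[n]$ times a fixed angular frequency), so only $\poly(n)$ bits of $\pi$ are ever required; those bits are themselves $\dlogtime$-computable (e.g.\ via a bit-extraction formula), preserving uniformity. Everything downstream of the reduction is a routine depth count over the primitives already established in Lemma~\ref{lem:float_operations_TC} and Corollary~\ref{cor:floor_float_TC}.
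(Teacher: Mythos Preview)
The paper does not supply its own proof of this lemma: it is quoted verbatim as Lemma~4.1 of~\cite{cll+24_rope} and used as a black box. So there is no in-paper argument to compare your proposal against; your sketch is the only proof on the table.

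That said, your plan is the standard one and the depth bookkeeping over Lemma~\ref{lem:float_operations_TC} and Corollary~\ref{cor:floor_float_TC} is fine, but you have correctly identified the one real gap and then not actually closed it. The lemma as stated is for an \emph{arbitrary} $p$-bit float $x$, whose magnitude can be as large as roughly $2^{2^p}$; faithful reduction modulo $2\pi$ then requires on the order of $2^p$ bits of $\pi$, which is not $\poly(n)$ when $p=\poly(n)$. Your proposed fix---restricting to the $\rope$ arguments, which are $O(n)$ in magnitude---does suffice for every use the paper makes of the lemma, but it proves a strictly weaker statement than the one written. If you want to match the lemma as stated you would need either an additional hypothesis bounding $|x|$ by $\poly(n)$ (which is what the application actually needs), or an independent argument that $\TC^0$ can perform range reduction for doubly-exponential magnitudes, which is not obvious and is not something the primitives you cite give you. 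I would recommend stating the bounded-magnitude version explicitly and noting that it is all that is required downstream.
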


\subsection{\texorpdfstring{$\rope$}{}-based Tensor Attention Matrix}\label{sec:compute_rope_tensor_attention_matrix}

The following section builds on what we already know about the computation of the $\rope$-based tensor attention matrix.

\begin{lemma}[$\rope$-based tensor attention matrix computation in $\TC^0$]\label{lem:rope_attention_matrix} For any polynomial $p \leq \poly(n)$, a size $\poly(n)$ uniform threshold circuit with depth $7d_{\mathrm{std}} + 4d_\oplus + d_\triangle + d_{\exp}$ is capable of computing  $A$, i.e., the attention matrix in Definition~\ref{def:rope_attn_matrix}. \end{lemma}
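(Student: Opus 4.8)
The plan is to realize $A$ as the output of a fixed-length pipeline of subroutines, each already placed in $\TC^0$ by the lemmas assembled above, and then to add up their constant depths. Recall from Definition~\ref{def:rope_attn_matrix} that, writing $Q, K_1, K_2$ for the matrices of Definition~\ref{def:input} and $K = K_1 \otimes K_2$, every entry of $A$ equals $\exp$ applied to $\frac{1}{d} Q_{j_1,*} (R_{j_1-j_2}\ominus R_{j_1-j_3}) (K_{*,\,j_2+(j_3-1)n})^{\top}$, where the blocks $R_{j_1-j_2}, R_{j_1-j_3}$ are the block-diagonal rotation matrices of Definition~\ref{def:token_position_Rotated_matrix}. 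Every stage below is carried out in parallel over all triples $(j_1, j_2, j_3)$, all angular frequencies $\theta_\ell$, and all matrix coordinates, so the two things that need arguing are that each stage is a constant-depth $\poly(n)$-size uniform threshold circuit and that the total depth comes out to $7 d_{\mathrm{std}} + 4 d_\oplus + d_\triangle + d_{\exp}$.

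First I would build the linear images $Q = X W_Q$, $K_1 = X W_{K_1}$, $K_2 = X W_{K_2}$ in parallel using Lemma~\ref{lem:matrix_multi}, and form the Kronecker product $K = K_1 \otimes K_2$ by Lemma~\ref{lem:kronecker_product}. Concurrently, for every relative position $r = j_1 - j_2 \in \{-(n-1), \dots, n-1\}$ and every $\theta_\ell$ I would compute $r\theta_\ell$ and then $\cos(r\theta_\ell), \sin(r\theta_\ell)$ via Lemma~\ref{lem:sin_cos}, assembling the $2\times 2$ blocks of $R_{j_1-j_2}$ and $R_{j_1-j_3}$ (the only place trigonometry enters, hence a single $d_\triangle$ in the budget). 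Next I would form the row-wise Kronecker products $R_{j_1-j_2}\ominus R_{j_1-j_3}$ by Lemma~\ref{lem:row_kronecker_product}. Finally, for each triple I would evaluate the contraction $Q_{j_1,*}(R_{j_1-j_2}\ominus R_{j_1-j_3})$ and then its inner product against the appropriate row of $K$, which are two applications of the matrix-product routine of Lemma~\ref{lem:matrix_multi} (the second one an iterated sum over $d^2 \le \poly(n)$ coordinates), divide by the integer $d$ with the division and floor primitives of Lemma~\ref{lem:float_operations_TC} and Corollary~\ref{cor:floor_float_TC}, and apply $\exp$ entrywise via Lemma~\ref{lem:exp}. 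Adding the constant depths of these stages, taking care which of them run concurrently and where an iterated addition (a $d_\oplus$) actually occurs, yields $7 d_{\mathrm{std}} + 4 d_\oplus + d_\triangle + d_{\exp}$; since $d, p \le \poly(n)$, every intermediate matrix has $\poly(n)$ entries, each entry is produced by a $\poly(n)$-size subcircuit, and the wiring at each stage is an index-computable pattern, so the family is uniform and of polynomial size.

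The step I expect to be the real obstacle is not a single calculation but the dimension bookkeeping around the rotations and the contraction. The products $R_{j_1-j_2}\ominus R_{j_1-j_3}$ are $d \times d^2$ and there are $\Theta(n^3)$ of them, so one must exploit the block-diagonal structure of each $R_{j_1-j}$, which means only $O(nd)$ distinct trigonometric values are ever needed, and also that for a fixed triple only one row of $K$ and one scalar contraction are required; those contractions are sums over $d^2 = O(n^2)$ terms, which is still $\poly(n)$ so iterated addition stays constant depth, but this does require the $\poly(n)$-summand form of Lemma~\ref{lem:float_operations_TC} rather than its $n$-summand statement, a point worth noting since Lemma~\ref{lem:matrix_multi} is phrased with inner dimension at most $n$. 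A secondary delicate point is error control: Lemmas~\ref{lem:sin_cos} and~\ref{lem:exp} and the float operations each carry a $2^{-p}$ relative error and a rounding step, and one has to check these compose into a total error that is still $2^{-\poly(n)}$, matching the precision regime of the statement; this composition mirrors the corresponding analysis for $\rope$-based matrix attention in~\cite{cll+24_rope} and is where most of the routine care goes.
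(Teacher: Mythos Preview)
Your proposal is correct and follows essentially the same pipeline as the paper's own proof: compute $Q,K_1,K_2$ in parallel via Lemma~\ref{lem:matrix_multi}, build the rotation blocks from trigonometric values (Lemma~\ref{lem:sin_cos}), take the row-wise Kronecker product for $R$ and the full Kronecker product for $K$, perform the two contractions and the division by $d$, and finish with $\exp$; the depth bookkeeping then sums to the claimed $7d_{\mathrm{std}}+4d_\oplus+d_\triangle+d_{\exp}$. If anything you are more careful than the paper, which does not explicitly flag the $d^2$-length inner sum (your observation that this requires the $\poly(n)$-summand rather than the $n$-summand form of iterated addition) or the error-composition issue; the paper simply invokes Lemma~\ref{lem:matrix_multi} and Lemma~\ref{lem:exp} at face value.
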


\begin{proof} For every $j_1, j_2, j_3 \in [n]$, the matrix element $A_{j_1,j_2 + (j_3 - 1)d}$ is evaluated according to the formula in Definition~\ref{def:rope_attn_matrix}.

From Lemma~\ref{lem:matrix_multi}, the matrix products $Q := Z W_Q$, $K_1 := ZW_{K_1}$, and $K_2 := ZW_{K_2}$ can be evaluated in parallel by a  size $\poly n$ depth $d_{\mathrm{std}} + d_\oplus$ uniform threshold circuit.

As indicated by Lemma~\ref{lem:sin_cos}, the entries of $R_{j_1-j_2}$ are evaluatable by a size $\poly(n)$ depth $d_\triangle$ uniform threshold circuit. Since $n$ is polynomial, all entries of $R_{j_1 - j_2}$ are evaluatable simultaneously with the same circuit size and depth. This holds true for $R_{j_1 - j_3}$ and $R_{j_1 - j_2}$ as well.

According to Lemma~\ref{lem:row_kronecker_product}, the row-wise Kronecker product $R_{j_1,j_2 + (j_3 - 1)d} = R_{j_1 - j_2} \ominus R_{j_1 - j_3}$ is evaluatable by $\poly n$ size uniform threshold circuit with $d_{\mathrm{std}}$ depth.

Lemma~\ref{lem:kronecker_product} further shows that the Kronecker product $K := K_1 \otimes K_2$ can be evaluated using a size $\poly n$ depth $d_{\mathrm{std}}$ uniform threshold circuit.

By Lemma~\ref{lem:matrix_multi} and the first part of Lemma~\ref{cor:floor_float_TC}, the matrix product and division $Q R_{j_1,j_2 + (j_3 - 1)d} K^\top / d$ is evaluatable by $\poly (n)$ size uniform threshold circuit with $3 d_{\mathrm{std}} + 2 d_\oplus$ depth.

The exponential function $\exp()$ can be evaluated using Lemma~\ref{lem:exp} by a size $\poly n$ depth $d_{\exp}$ uniform threshold circuit.

Thus, the total required depth to compute the matrix $A$ is:

$$7d_{\mathrm{std}} + 4d_\oplus + d_\triangle + d_{\exp}.$$

Any entry of $A_{i,j}$,  $\forall i, j \in [n]$ can be evaluated in parallel, so the overall circuit size is $\poly(n)$, and the total depth is $7d_{\mathrm{std}} + 4d_\oplus + d_\triangle + d_{\exp}$. 

The proof is thus concluded.
\end{proof}

\subsection{Single \texorpdfstring{$\rope$}{}-based Tensor Attention Layer}\label{sec:compute_single_rope_tensor_attention_layer}
This section provides a detailed examination of the $\rope$-based tensor attention layer, with an emphasis on tracking the circuit depth requirements throughout the computation process.

\begin{lemma}[Single $\rope$-based Attention Layer within $\TC^0$]\label{lem:attn} For $p \leq \poly(n)$, the $\mathsf{Attn}$ defined in Definition~\ref{def:single_rope_tensor_attention}, can is evaluatable by a size $\poly(n)$ depth $11d_{\mathrm{std}} + 8d_\oplus + d_\triangle + d_{\exp}$ uniform threshold circuit. 
\end{lemma}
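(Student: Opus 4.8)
The plan is to follow exactly the same strategy used in Lemma~\ref{lem:single_tensor_attention}, but with the plain tensor attention matrix replaced by the $\rope$-based attention matrix of Definition~\ref{def:rope_attn_matrix}, whose cost we have already accounted for in Lemma~\ref{lem:rope_attention_matrix}. First I would invoke Lemma~\ref{lem:rope_attention_matrix} to note that the matrix $A \in \F_p^{n \times n^2}$ can be produced by a $\poly(n)$ size uniform threshold circuit of depth $7d_{\mathrm{std}} + 4d_\oplus + d_\triangle + d_{\exp}$; this single step already bundles together the $Q, K_1, K_2$ weight multiplications, the trigonometric evaluations for the rotation blocks $R_{j_1-j_2}, R_{j_1-j_3}$, the row-wise Kronecker product $R_{j_1-j_2}\ominus R_{j_1-j_3}$, the Kronecker product $K_1\otimes K_2$, the scaled bilinear form $QR K^\top/d$, and the exponential.

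Next I would assemble the remaining pieces of $\mathsf{Attn}_i(X) = D^{-1} A V$. By Lemma~\ref{lem:matrix_multi} the value matrices $V_1 := X W_{V_1}$ and $V_2 := X W_{V_2}$ are computable in parallel in depth $d_{\mathrm{std}} + d_\oplus$, and by Lemma~\ref{lem:column_kronecker_product} the column-wise Kronecker product $V := V_1 \oslash V_2 \in \F_p^{n^2 \times d}$ costs an additional $d_{\mathrm{std}}$; these can run concurrently with the computation of $A$, so they do not add to the critical path except for the part that must follow $A$. Then $D := \diag(A {\bf 1}_{n^2})$ is an iterated addition over $n^2 = \poly(n)$ entries, computable in depth $d_\oplus$ by Part~3 of Lemma~\ref{lem:float_operations_TC}, and the final product $D^{-1} A V$ is two matrix multiplications (an inversion of a diagonal matrix is entrywise division, depth $d_{\mathrm{std}}$, and since the inner dimension $n^2$ is polynomial, Lemma~\ref{lem:matrix_multi} still applies) contributing $2(d_{\mathrm{std}} + d_\oplus)$ in the style of the corresponding step in Lemma~\ref{lem:single_tensor_attention}. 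Adding the depths along the longest path — $\big(7d_{\mathrm{std}} + 4d_\oplus + d_\triangle + d_{\exp}\big)$ for $A$, then $d_\oplus$ for $D$, then $d_{\mathrm{std}}$ for $D^{-1}$, then $2d_{\mathrm{std}} + 2d_\oplus$ for the two products, plus the $d_{\mathrm{std}}$ from forming $V := V_1 \oslash V_2$ which must precede the final multiplication by $V$ — yields a bound of the form $11 d_{\mathrm{std}} + 8 d_\oplus + d_\triangle + d_{\exp}$, matching the statement. The circuit size stays $\poly(n)$ throughout because every intermediate object has $\poly(n)$ entries (using $d \le O(n)$ so that $n^2$-column matrices are still polynomial) and each entry is produced by a polynomial-size subcircuit.

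The only real subtlety — rather than a genuine obstacle — is bookkeeping of the critical path so that the constants line up with the claimed $11, 8$: one must be careful about which subcomputations genuinely sequence after others (e.g. $V$ only needs to be ready before the last matmul, and $D$ only after $A$) versus which run in parallel, and about whether Lemma~\ref{lem:matrix_multi} is being applied with its inner-dimension hypothesis satisfied (here the relevant inner dimensions are $d \le n$ for the weight multiplications and $n^2 \le \poly(n)$ for the $A V$ and $D^{-1}(AV)$ products, the latter covered since the lemma only needs $\poly(n)$ outer dimensions and $d \le n$ is used for the weight products). I would present the depths step by step in the order: $A$ via Lemma~\ref{lem:rope_attention_matrix}; $V$ via Lemmas~\ref{lem:matrix_multi} and~\ref{lem:column_kronecker_product}; $D$ via Lemma~\ref{lem:float_operations_TC} Part~3; finally $D^{-1} A V$ via Lemma~\ref{lem:matrix_multi}; then sum along the dependency chain and conclude that a $\poly(n)$-size, depth-$(11d_{\mathrm{std}} + 8d_\oplus + d_\triangle + d_{\exp})$ uniform threshold circuit suffices.
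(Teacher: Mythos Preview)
Your proposal is correct and follows essentially the same approach as the paper: invoke Lemma~\ref{lem:rope_attention_matrix} for $A$, build $V$ via Lemma~\ref{lem:column_kronecker_product}, form $D$ via iterated addition (Part~3 of Lemma~\ref{lem:float_operations_TC}), and then multiply out $D^{-1} A V$, summing the depths. You are in fact slightly more careful than the paper about parallelism and about explicitly computing $V_1 = XW_{V_1}$, $V_2 = XW_{V_2}$; note that the enumerated depths along your stated critical path actually sum to $11d_{\mathrm{std}} + 7d_\oplus + d_\triangle + d_{\exp}$ rather than $8d_\oplus$, but this only strengthens the claimed upper bound.
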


\begin{proof} To evaluate $\mathsf{Attn}$, the multiplication of the matrices $D^{-1}$, $A$, and $V$ is required. Initially, $D := \diag(A {\bf 1}n)$ can be evaluated by $\poly(n)$ size uniform threshold circuit with $d_\oplus$ depth, as established in Part 3 of Lemma~\ref{lem:float_operations_TC}. The matrix $A$ requires a circuit with $7d_{\mathrm{std}} + 4d_\oplus + d_\triangle + d_{\exp}$ depth, according to Lemma~\ref{lem:rope_attention_matrix}.

Next, the evaluation of $V := V_1 \oslash V_2$ is carried out in depth $d_{\mathrm{std}}$, as per Lemma~\ref{lem:column_kronecker_product}. The multiplication of $A$ and $V$ is performed by $\poly(n)$ size uniform threshold circuit of $d_{\mathrm{std}} + d_\oplus$ depth, based on Lemma~\ref{lem:matrix_multi}.

Lastly, the multiplication $D^{-1} \cdot A V$ is evaluated by performing division in parallel, which is implemented by $\poly(n)$ size uniform threshold circuit of $d_{\mathrm{std}} + d_\oplus$ depth, as per Part 1 of Lemma~\ref{lem:float_operations_TC}. 

Summing the circuit depths gives:

$$11d_{\mathrm{std}} + 8d_\oplus + d_\triangle + d_{\exp}.$$

Because parallel operations can be conducted for each element, the attention operation $\mathsf{Attn}(X)$ can be evaluated by a uniform threshold circuit with the required depth and size.

This concludes the proof.
\end{proof}

\subsection{Building Blocks other than \texorpdfstring{$\rope$}{}-based Tensor Attention Layer}\label{sec:other_conponents}

According to Definition~\ref{def:multi_layer_self_attn}, the definition of the Multi-layer $\rope$-based Transformer is provided, which integrates $\rope$-based self-attention layers together with supplementary components, such as layer normalization and MLP. This section subsequently addresses the circuit complexity associated with these mechanisms.

The analysis begins with an investigation of the complexity pertaining to the MLP layer.

\begin{lemma}[Compute MLP in $\mathsf{TC}^0$] If $p \leq \poly(n)$, $\poly(n)$ size depth $2d_\mathrm{std} + d_{\oplus}$ uniform threshold circuit suffices to evaluate the MLP layer as defined in Definition~\ref{def:mlp}. \end{lemma}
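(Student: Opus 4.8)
The plan is to decompose the MLP map of Definition~\ref{def:mlp} into the two elementary operations it is built from---a matrix multiplication followed by a bias addition---and to bound the depth of each stage using the primitives already established in Lemma~\ref{lem:matrix_multi} and Lemma~\ref{lem:float_operations_TC}.

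First I would observe that computing $g^{\mathrm{MLP}}(X)$ amounts to computing, for all row indices $i \in [n]$ in parallel, the vector $W X_{i,*} + b \in \F_p^d$. Stacking these as columns, this is exactly the matrix product $W X^\top \in \F_p^{d \times n}$ followed by adding the fixed column $b$ to each of its $n$ columns. Since $W \in \F_p^{d \times d}$ and $X^\top \in \F_p^{d \times n}$, and by hypothesis $p \le \poly(n)$, $d \le n$, and $n \le \poly(n)$, Lemma~\ref{lem:matrix_multi} applies directly and shows that $W X^\top$ is computable by a $\poly(n)$-size uniform threshold circuit of depth $d_{\mathrm{std}} + d_\oplus$.

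Next I would handle the bias term: each entry of the output is $(W X^\top)_{j,i} + b_j$, a single $p$-bit float-point addition, which by Part 1 of Lemma~\ref{lem:float_operations_TC} is computable in depth $d_{\mathrm{std}}$ by a $\poly(n)$-size threshold circuit. All $dn = \poly(n)$ such additions are mutually independent and can be performed simultaneously, so this stage adds only $d_{\mathrm{std}}$ to the depth while keeping the total size $\poly(n)$. Composing the two stages yields total depth $(d_{\mathrm{std}} + d_\oplus) + d_{\mathrm{std}} = 2 d_{\mathrm{std}} + d_\oplus$ and $\poly(n)$ size, which is the claim.

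There is essentially no real obstacle here---the statement is a routine composition of Lemma~\ref{lem:matrix_multi} with Part 1 of Lemma~\ref{lem:float_operations_TC}. The only points worth stating explicitly are that broadcasting $b$ across all $n$ rows is free in the uniform construction (it merely duplicates wires and incurs no iterated-addition cost, so we pay a single $d_{\mathrm{std}}$ rather than $d_\oplus$), and that the assumption $d \le n$ is exactly what keeps the matrix-multiplication subcircuit of polynomial size.
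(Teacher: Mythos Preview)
Your proof is correct and follows essentially the same approach as the paper: invoke Lemma~\ref{lem:matrix_multi} for the matrix--vector products $W X_{i,*}$ (depth $d_{\mathrm{std}}+d_\oplus$), then Part~1 of Lemma~\ref{lem:float_operations_TC} for the entrywise addition of $b$ (depth $d_{\mathrm{std}}$), and parallelize over $i$. The only cosmetic difference is that you batch the row computations into a single matrix product $W X^\top$, whereas the paper argues per row.
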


\begin{proof} For each $i \in [m]$, evaluating $W X_{i,}$ necessitates a size $\poly(n)$ circuit with a depth of $d_\mathrm{std} + d_{\oplus}$, as demonstrated in Lemma~\ref{lem:matrix_multi}. To compute $W X_{i,} + b$, an extra depth of $d_\mathrm{std}$ and size $\poly(n)$ is required. Thus, the cumulative depth becomes $2d_\mathrm{std} + d_{\oplus}$, while the size remains $\poly(n)$. Since the computation can be parallelized overall $i \in [m]$, the proof is complete. \end{proof}

Next, we will turn our attention to the complexity of the LN layer.

\begin{lemma}[Compute Layer-norm in $\mathsf{TC}^0$] Let $p \leq \poly(n)$. Then, the layer-normalization defined in Definition~\ref{def:layer_norm} can be evaluated by $\poly(n)$ size depth $5d_\mathrm{std} + 2d_{\oplus} + d_\mathrm{sqrt}$ uniform threshold circuit. 
\end{lemma}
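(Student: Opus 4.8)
The plan is to decompose the layer-normalization formula from Definition~\ref{def:layer_norm} into its constituent arithmetic operations and account for the circuit depth of each, invoking the $\TC^0$ primitives already established. Recall that $g^{\mathrm{LN}}(X)_{i,*} = (X_{i,*} - \mu_i)/\sqrt{\sigma_i^2}$ with $\mu_i = \sum_{j=1}^d X_{i,j}/d$ and $\sigma_i^2 = \sum_{j=1}^d (X_{i,j}-\mu_i)^2/d$. The strategy is: first compute all the $\mu_i$ in parallel; then compute all the $\sigma_i^2$; then take the square root; then perform the subtraction and division entrywise. Since the computation is independent across rows $i \in [n]$ and across coordinates, parallelism keeps the size at $\poly(n)$ throughout, and we only need to sum the depths along the longest dependency chain.

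For the depth bookkeeping I would proceed as follows. Computing $\mu_i$ requires an iterated addition of $d \le n$ float point numbers followed by a division by the scalar $d$; by Part 3 of Lemma~\ref{lem:float_operations_TC} the iterated sum costs depth $d_\oplus$, and by Part 1 of Lemma~\ref{lem:float_operations_TC} the division costs an additional $d_\mathrm{std}$, so $\mu_i$ is obtained in depth $d_\mathrm{std} + d_\oplus$. Next, forming each difference $X_{i,j} - \mu_i$ costs $d_\mathrm{std}$, squaring it costs another $d_\mathrm{std}$ (multiplication, Part 1), the iterated sum over $j$ costs $d_\oplus$, and the division by $d$ costs $d_\mathrm{std}$; so $\sigma_i^2$ is obtained in depth $d_\mathrm{std} + d_\oplus$ beyond $\mu_i$, i.e., cumulative depth $3d_\mathrm{std} + 2d_\oplus$. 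Applying Lemma~\ref{lem:sqrt} to get $\sqrt{\sigma_i^2}$ with relative error $2^{-p}$ adds $d_\mathrm{sqrt}$. Finally, the numerator $X_{i,*} - \mu_i$ is available after one more $d_\mathrm{std}$ (this can be overlapped with the $\sigma_i^2$ computation, but to be safe in the accounting we may charge it), and the final division by $\sqrt{\sigma_i^2}$ costs $d_\mathrm{std}$. Summing along the critical path yields total depth $5d_\mathrm{std} + 2d_\oplus + d_\mathrm{sqrt}$, matching the statement.

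There is no genuine obstacle here; the proof is a routine depth-accounting argument of exactly the same flavor as Lemma~\ref{lem:single_tensor_attention} and the MLP lemma immediately preceding. The only mild subtlety worth a sentence is that the division in Definition~\ref{def:layer_norm} is by $\sqrt{\sigma_i^2}$ rather than by $\sigma_i$, and one should note that $\sqrt{\sigma_i^2}$ is itself a $p$-bit float point number once Lemma~\ref{lem:sqrt} is applied, so the final per-entry division is a legitimate float point operation covered by Part 1 of Lemma~\ref{lem:float_operations_TC}; moreover since $\sigma_i^2 \ge 0$ the square root is well defined. I would also remark that all $n$ rows and all $d$ coordinates are handled concurrently, so the circuit size stays $\poly(n)$ given $p, d \le \poly(n)$, and conclude.
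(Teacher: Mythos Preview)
Your approach is essentially identical to the paper's: decompose the layer-norm formula into its elementary float-point operations, invoke Lemma~\ref{lem:float_operations_TC} and Lemma~\ref{lem:sqrt} for each, and sum the depths along the critical path while parallelizing over rows and coordinates. Your intermediate bookkeeping for $\sigma_i^2$ slips (the four operations you list---subtract, square, iterated sum, divide---cost $3d_\mathrm{std}+d_\oplus$ beyond $\mu_i$, not $d_\mathrm{std}+d_\oplus$), but the paper's own proof in fact tallies to $6d_\mathrm{std}+2d_\oplus+d_\mathrm{sqrt}$ rather than the $5d_\mathrm{std}$ stated in the lemma, so neither accounting is pristine; either way the depth is $O(1)$ and the $\TC^0$ conclusion is unaffected.
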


\begin{proof} For each $i \in [n]$, the computation of the mean $\mu_i$ requires a circuit with depth $d_\mathrm{std} + d_{\oplus}$ and size $\poly(n)$, as shown in Lemma~\ref{lem:float_operations_TC}. An additional depth of $3d_\mathrm{std} + d_{\oplus}$ and size $\poly(n)$ is required to compute the variance $\sigma_i^2$. Finally, the computation of $g^\mathrm{LN}(x)_{i,*}$ requires a depth of $2d_\mathrm{std} + d_\mathrm{sqrt}$ and size $\poly(n)$, based on Lemmas~\ref{lem:float_operations_TC} and~\ref{lem:sqrt}.

Adding these contributions together, the total depth is $6d_\mathrm{std} + 2d_{\oplus} + d_\mathrm{sqrt}$, while the size remains $\poly(n)$. As this computation can be parallelized across all $i \in [n]$, we complete this proof. \end{proof}

\subsection{Multi-layer \texorpdfstring{$\rope$}{}-based Tensor Attention}\label{sec:compute_tensor_attention_transformer}

We now describe the computation of the multi-layer $\rope$-based tensor attention Transformer.

\begin{lemma}[Multi-layer $\rope$-based tensor attention Transformer computation in $\TC^0$]\label{lem:tf}
Consider the assumption that for every $i \in [m]$, $g_i$ in $\mathsf{TF}$ can be evaluated using $\poly(n)$ size uniform threshold circuit with a constant depth $d_g$. When $p \leq \poly(n)$, the $\rope$-based tensor attention $\mathsf{TF}$, as specified in Definition~\ref{def:multi_layer_self_attn}, can be evaluated by $\poly(n)$ size uniform threshold circuit of depth $(m+1)d_g + 11 m d_\mathrm{std} + 8 m d_\oplus + m (d_\triangle + d_\mathrm{exp})$.
\end{lemma}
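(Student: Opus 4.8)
The proof of Lemma~\ref{lem:tf} follows the same template as the proof of Lemma~\ref{lem:multi_layer_tensor_attention}, with the single $\rope$-based attention layer playing the role formerly played by the plain tensor attention layer. The plan is to decompose $\mathsf{TF}$ according to Definition~\ref{def:multi_layer_self_attn} into the alternating composition $g_m \circ \mathsf{Attn}_m \circ \dots \circ g_1 \circ \mathsf{Attn}_1 \circ g_0$, and to account for the circuit depth of each factor separately, using that composing constant-depth $\poly(n)$-size threshold circuits yields a constant-depth $\poly(n)$-size threshold circuit whose depth is the sum of the depths.

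First I would invoke the hypothesis that each $g_i$, for $i \in \{0,1,\dots,m\}$, is computable by a uniform threshold circuit of size $\poly(n)$ and depth $d_g$; there are $m+1$ such factors, contributing total depth $(m+1) d_g$. Next I would apply Lemma~\ref{lem:attn} to each of the $m$ attention layers $\mathsf{Attn}_1,\dots,\mathsf{Attn}_m$, each of which is computable by a uniform threshold circuit of size $\poly(n)$ and depth $11 d_{\mathrm{std}} + 8 d_\oplus + d_\triangle + d_{\exp}$; summing over the $m$ layers contributes total depth $11 m d_{\mathrm{std}} + 8 m d_\oplus + m(d_\triangle + d_{\exp})$. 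Adding the two contributions gives overall depth $(m+1) d_g + 11 m d_{\mathrm{std}} + 8 m d_\oplus + m(d_\triangle + d_{\exp})$, matching the claimed bound. Since each factor has $\poly(n)$ size and there are $2m+1$ factors (a constant-sized composition when $m$ is constant, but in any case a polynomial blow-up if $m$ is $\poly(n)$), the composed circuit also has size $\poly(n)$, and uniformity is preserved under composition of uniform circuits.

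There is essentially no obstacle here; the only thing to be slightly careful about is that the intermediate quantities remain $p$-bit floats with $p \le \poly(n)$ so that Lemma~\ref{lem:attn} and the assumed bounds on $g_i$ continue to apply layer after layer — this is immediate since every operation in Lemma~\ref{lem:float_operations_TC} and Lemma~\ref{lem:attn} outputs $p$-bit floats by construction. Hence the circuit depths simply add up along the composition, and the lemma follows.
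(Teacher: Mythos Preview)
Your proposal is correct and follows essentially the same approach as the paper: decompose $\mathsf{TF}$ into its $m+1$ components $g_i$ and $m$ attention layers $\mathsf{Attn}_i$, invoke the assumption on the $g_i$'s and Lemma~\ref{lem:attn} on the $\mathsf{Attn}_i$'s, and sum the depths. If anything, your write-up is slightly more careful than the paper's own proof, which contains a minor typo in the depth quoted from Lemma~\ref{lem:attn} but arrives at the same final bound.
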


\begin{proof}
Under the given assumption, for every $i \in [m]$, $g_i$ can be evaluated by $\poly(n)$ size uniform threshold circuit having constant $d_g$ depth.

Moreover, from Lemma~\ref{lem:attn}, it follows that each $\mathsf{Attn}_i$ is evaluatable by $\poly(n)$ size uniform threshold circuit with depth $8d_\mathrm{std} + 6d_\oplus + d_\triangle + d_\mathrm{exp} + 1$.

To approximate $\mathsf{TF}(X)$, it is required to evaluate $g_0, g_1, \ldots, g_m$ and $\mathsf{Attn}_1, \ldots, \mathsf{Attn}_m$. As a result, the total depth of the $\poly(n)$ size circuit is $(m+1)d_g + 11 m d_\mathrm{std} + 8 m d_\oplus + m (d_\triangle + d_\mathrm{exp})$.

This concludes the proof.
\end{proof}

\subsection{Circuit Complexity of \texorpdfstring{$\rope$}{}-based Tensor Attention}\label{sec:circuit_complexity_bound}

Here, we present the central contribution of this paper, which establishes the circuit complexity for the $\rope$-based tensor attention. 

\begin{theorem}[Main result, Circuit complexity of $\rope$-based tensor attention Transformers]\label{thm:main_result_tc0}
Assume that $ \forall i \in [m]$, $g_i$ in $\mathsf{TF}$ can be computed using $\poly(n)$ size uniform threshold circuit of constant depth $d_g$. The $\rope$-based tensor attention $\mathsf{TF}$, as defined in Definition~\ref{def:multi_layer_self_attn}, is simulatable by uniform $\TC^0$ circuit family when $d \leq O(n), p \leq \poly(n),$ and $ m \leq O(1)$. 
\end{theorem}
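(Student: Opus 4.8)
The plan is to combine the per-component depth bounds already established into a single constant-depth, polynomial-size uniform threshold circuit. The key observation is that Theorem~\ref{thm:main_result_tc0} is essentially a corollary of Lemma~\ref{lem:tf}: that lemma exhibits a $\poly(n)$ size uniform threshold circuit of depth $(m+1)d_g + 11 m d_\mathrm{std} + 8 m d_\oplus + m(d_\triangle + d_\mathrm{exp})$ computing $\mathsf{TF}(X)$, under the hypothesis that each $g_i$ is itself computable by a $\poly(n)$ size, constant-depth $d_g$ uniform threshold circuit. So the work is to verify that, under the stated parameter regime $d \le O(n)$, $p \le \poly(n)$, $m \le O(1)$, this depth expression collapses to $O(1)$ and the size stays $\poly(n)$.

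First I would invoke Lemma~\ref{lem:float_operations_TC}, Lemma~\ref{lem:exp}, Lemma~\ref{lem:sqrt}, and Lemma~\ref{lem:sin_cos} to recall that each of $d_\mathrm{std}$, $d_\oplus$, $d_\mathrm{exp}$, $d_\mathrm{sqrt}$, $d_\triangle$ is a fixed constant, independent of $n$, provided $p \le \poly(n)$; this uses the precision hypothesis. Next I would note that the hypothesis $d = O(n)$ is exactly what makes all the intermediate matrix, Kronecker, and row-wise Kronecker operations land in $\TC^0$: the relevant lemmas (Lemma~\ref{lem:matrix_multi}, Lemma~\ref{lem:kronecker_product}, Lemma~\ref{lem:column_kronecker_product}, Lemma~\ref{lem:row_kronecker_product}) require $d \le n$ and produce circuits of size $\poly(n)$, and the blown-up dimensions $n^2$, $d^2 = O(n^2)$ appearing in the tensor attention matrix (Definition~\ref{def:rope_attn_matrix}) remain polynomially bounded, so iterated sums over $n^2$ or $d^2$ terms are still handled by the $d_\oplus$-depth iterated-addition circuit of Lemma~\ref{lem:float_operations_TC}. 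Then, with $m = O(1)$, every coefficient in the depth bound of Lemma~\ref{lem:tf} is a constant times a constant, hence the total depth is $O(1)$; a constant number of compositions of $\poly(n)$-size circuits is still $\poly(n)$ size. Finally I would remark that uniformity is preserved: each building-block circuit is $\dlogtime$-uniform by the cited results, and composing a constant number of uniform circuits (with the index arithmetic for the Kronecker reindexing computable in logtime) keeps the family $\dlogtime$-uniform, so $\mathsf{TF}$ is simulatable by a uniform $\TC^0$ circuit family.

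I do not anticipate a serious obstacle here, since the statement is a packaging of Lemma~\ref{lem:tf}; the only point requiring a sentence of care is making explicit that the $n^2$- and $d^2$-sized objects introduced by the tensor construction stay polynomial when $d = O(n)$, so that ``$\poly(n)$ size'' genuinely holds throughout and the iterated-addition depth $d_\oplus$ (which is stated for $n$ summands but applies to any $\poly(n)$ number of summands) still applies. If anything, I would phrase the proof to mirror the short proof of Theorem~\ref{thm:main_result_tensor_attention_tc0}: substitute $m = O(1)$ and the constancy of the $d_\bullet$ parameters into the depth formula of Lemma~\ref{lem:tf}, observe the result is $O(1)$ with $\poly(n)$ size, and conclude that a uniform $\TC^0$ family simulates $\mathsf{TF}$.
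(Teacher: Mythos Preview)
Your proposal is correct and takes essentially the same approach as the paper: invoke Lemma~\ref{lem:tf}, substitute $m = O(1)$ and the constancy of the $d_\bullet$ parameters into the depth formula, and conclude that the resulting $\poly(n)$-size, constant-depth uniform threshold circuit places $\mathsf{TF}$ in uniform $\TC^0$. If anything, you are more careful than the paper itself, which gives only a two-line argument and does not spell out the roles of $d \le O(n)$ or uniformity preservation that you (rightly) make explicit.
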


\begin{proof}
According to Lemma~\ref{lem:tf}, we have $m = O(1)$, the $O(\poly(n))$ bounded circuit used to compute $\mathsf{TF}(X)$ has a depth given by

\begin{align*} (m+1)d_g + 11 m d_\mathrm{std} + 8 m d_\oplus + m (d_\triangle + d_\mathrm{exp}), \end{align*}
which bounded by $O(\poly(n))$. 
Thus, based on the definition of $\mathsf{TC}^0$, it follows that the uniform $\TC^0$ circuit family can approximate $\rope$-based tensor attention Transformer. 

The proof is complete.
\end{proof}

In Theorem~\ref{thm:main_result_tensor_attention_tc0} and Theorem~\ref{thm:main_result_tc0}, unless $\TC^0 = \NC^1$, a $\dlogtime$-uniform $\TC^0$ circuit family can emulate both tensor attention Transformers and $\rope$-based tensor attention Transformers, which are defined by constant depth, $\poly(n)$ precision, and $\poly(n)$ size. This finding suggests that, notwithstanding the empirical success of these models, their expressive capabilities are intrinsically constrained when analyzed through the lens of circuit complexity. The subsequent section will delve deeper into these limitations.
\section{Hardness}\label{sec:hardness}

This section delineates two fundamental problems, accompanied by their respective hardness results. The fixed membership problem is introduced in Section~\ref{sec:fixed_menbership}, while the closure problem is defined in Section~\ref{sec:a_closure}. Section~\ref{sec:hardness_result} presents the four principal hardness results.

\subsection{Fixed Membership}\label{sec:fixed_menbership}
The fixed membership problem, as originally formulated in~\cite{fk19}, is thoroughly defined in this section. A formal exposition of its definition is provided as the foundation for subsequent analysis.

\begin{definition}[Fixed membership problem, Definition from~\cite{fk19}]\label{def:fixed_menbership}
The fixed membership problem is defined as follows:
\begin{itemize}
\item Input: A fixed morphism $h$: $A^+ \to S$, a fixed set $P \subseteq F(S)$ and finite words $u, v \in A^+$ 
\item Question: Is $uv^\omega \in [P]$? 
\end{itemize}
$F(S)$ denotes the collection of finite subsets of $S$. 
\end{definition}

\begin{proposition}[Proposition 7.1 from~\cite{fk19}]\label{pro:fixed_menbership_nc1complete}
The fixed membership problem for recognizing morphisms over finite words is $\NC^1$-complete.

\end{proposition}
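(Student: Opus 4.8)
The plan is to prove $\NC^1$-completeness in the two standard directions: membership in $\NC^1$, and $\NC^1$-hardness under $\AC^0$ (indeed projection) reductions. The guiding principle is that because the morphism $h : A^+ \to S$ and the finite set $P \subseteq F(S)$ are \emph{fixed} parameters rather than part of the input, the only super-constant work is a bounded number of iterated products in the fixed finite semigroup $S$, after which everything is $O(1)$-size table lookup.

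For the upper bound, on input $u,v \in A^+$ of length at most $n$: first compute $s := h(u)$ and $t := h(v)$. Each is a product of at most $n$ elements of the fixed finite semigroup $S$, i.e.\ an instance of the word problem for $S$, which lies in $\NC^1$ by evaluating the product along a balanced binary tree of depth $O(\log n)$ with bounded fan-in (the easy direction of Barrington's characterisation). Next, the ultimately periodic word $uv^\omega$ is analysed through its $\omega$-semigroup type / linked pair: with $r$ the fixed period for which $t^r$ is idempotent, the type of $uv^\omega$ is determined by $(s\cdot t^r, t^r)$, hence by the pair $(s,t)$, via a fixed finite table; and whether this type belongs to $[P]$ is one lookup into a constant-size table depending only on $h$ and $P$. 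All steps after computing $s$ and $t$ have constant depth, so the whole procedure is in $\NC^1$.

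For the lower bound, I would reduce from the word problem of a fixed finite non-solvable group, e.g.\ $A_5$ with a fixed generating set $A$ and evaluation morphism $h : A^+ \to A_5$; deciding whether $w \in A^+$ evaluates to $1_{A_5}$ is $\NC^1$-hard under $\AC^0$ reductions by Barrington's theorem. Fix a generator $a \in A$ and let $r$ be the order of $h(a)$, so $h(a)^r = 1$ is the idempotent of $\langle h(a)\rangle$. Given $w$, output the instance of Definition~\ref{def:fixed_menbership} with $u := w$, $v := a$, and $P$ the fixed finite set encoding the type $(1,1)$ (the $\omega$-type of the identity). Since the type of $wa^\omega$ is $(h(w)\cdot h(a)^r,\, h(a)^r) = (h(w),1)$, we get $wa^\omega \in [P]$ iff $h(w) = 1_{A_5}$. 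The map $w \mapsto (w,a)$ merely copies $w$ and appends a fixed symbol, so it is a projection, hence an $\AC^0$ (in fact $\NC^0$) reduction; this gives $\NC^1$-hardness, and combined with the upper bound, $\NC^1$-completeness.

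The step I expect to require the most care is verifying that everything downstream of the two semigroup products $h(u),h(v)$ is genuinely constant depth --- this is exactly where the finiteness and fixedness of $S$, of the period $r$, and of $P$ are used, collapsing the $\omega$-word bookkeeping (idempotent power, linked pair, and membership in $[P]$ in the $F(S)$ formulation) to $O(1)$-size lookups. A secondary point is confirming that the chosen $P$ is legitimately of the form $P \subseteq F(S)$ required by Definition~\ref{def:fixed_menbership}; this holds since a single linked pair / $\omega$-type corresponds to a fixed finite family of finite subsets of $S$.
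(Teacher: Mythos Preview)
The paper does not prove this proposition; it is quoted verbatim as Proposition~7.1 of \cite{fk19} and used only as a black box in the proofs of Theorems~\ref{thm:tc_fixed_membership} and~\ref{thm:tc_fixed_membership_tensor_attention}. There is therefore no ``paper's own proof'' to compare against.

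That said, your proposal is the standard and correct route to this result. The upper bound is exactly right: once $h$ and $S$ are fixed, computing $h(u)$ and $h(v)$ is the word problem for a fixed finite semigroup, hence in $\NC^1$ by balanced binary evaluation, and everything downstream (idempotent power, linked pair, membership in $[P]$) is a constant-size table lookup because $|S|$ and $P$ are fixed. The lower bound via Barrington --- reducing the $A_5$ word problem by setting $u:=w$, $v:=a$ for a fixed generator $a$, so that the linked pair of $wa^\omega$ is $(h(w),1)$ --- is the canonical argument, and the reduction is indeed a projection.

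Two small points worth tightening if you were to write this out in full. First, your hardness argument shows that \emph{some} fixed $(h,S,P)$ makes the problem $\NC^1$-hard; confirm this matches the intended quantifier in Definition~\ref{def:fixed_menbership} (it does --- ``a fixed morphism \ldots a fixed set'' means the parameters are existentially chosen once and for all). Second, Definition~\ref{def:fixed_menbership} has $P \subseteq F(S)$ with $F(S)$ the finite subsets of $S$, and membership is in $[P]$; you are implicitly translating to the linked-pair / Wilke-algebra picture. The two formulations are equivalent, but to be airtight you would need to spell out how a single linked pair $(1,1)$ is encoded as an element of $F(S)$ in the precise sense of \cite{fk19}, rather than just asserting it corresponds to ``a fixed finite family of finite subsets.''
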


\subsection{\texorpdfstring{$(A_{F,r})^*$}~ Closure}\label{sec:a_closure}
In this section, attention is shifted to the $(A_{F,r})^*$ closure problem, as introduced in~\cite{aam03}.

\begin{definition}[Kleene star, page 3 of~\cite{kuz21}, Definition 7.1 from~\cite{aam03}]\label{def:kleene_star}
Let $L$ be a language, the kleene star of $L$, denoted by $L^*$, is the set of all finite concatenations of strings from $L$, defined as: 
\begin{align*}
L^* = \sup_{\preceq} \{L^n \mid n \geq 0\}
\end{align*}
where $L^0 := {\epsilon}$. 
\end{definition}

\begin{definition}[$(A_{F,r})^*$ Closure Problem, Definition 7.1 from~\cite{aam03}]\label{def:a_closure}
Let $(A, \circ)$ denote a finite monoid.\footnote{For convenience, $(A, \circ)$ is abbreviated as $A$.} A natural homomorphism $v: A^* \rightarrow A$ maps each word $w$ to its corresponding valuation $v(w)$ in the monoid $A$. Let $F \subseteq A$ and $r$ be a positive integer. The language $A_{F,r} \subseteq A^*$ is characterized by $A_{F,r} = \{ w \in A^* \mid \|w \| \leq r, v(w) \in F \}$. The $(A_{F,r})^*$ closure problem refers to the decision problem aimed at determining whether a given string $s$ belongs to $(A_{F,r})^*$.
\end{definition}

\begin{theorem}[Theorem 7.3(a) from~\cite{aam03}]\label{the:a_closure_nc1complete}
Assume that $A$ is a nonsolvable monoid. Then, there exists a group $F \subseteq A$ and a constant $r > 0$ such that the $(A_{F, r})^*$ closure problem is $\NC ^1$-complete.
\end{theorem}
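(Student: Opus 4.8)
The statement is Theorem 7.3(a) from~\cite{aam03}, which asserts that for any nonsolvable finite monoid $A$, there is a subgroup $F \subseteq A$ and a constant $r > 0$ making the $(A_{F,r})^*$ closure problem $\NC^1$-complete. Since the statement is quoted verbatim from prior work and will presumably be invoked as a black box in the hardness section, the plan is to recall the structure of the argument from~\cite{aam03} rather than to reprove it from scratch. The approach splits into the standard two halves: membership in $\NC^1$ (the ``easy'' direction) and $\NC^1$-hardness (the ``hard'' direction, which is where the nonsolvability hypothesis does all the work).

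For the upper bound, I would observe that $A_{F,r}$ is a fixed finite language (it consists only of words of length at most $r$ over the fixed finite alphabet $A$), so it is regular, and $(A_{F,r})^*$ is therefore also regular. Deciding membership in a fixed regular language reduces to evaluating a product in a fixed finite monoid along the input word, which is exactly the word problem that Barrington--Th\'erien-style results place in $\NC^1$ via bounded-width branching programs; hence $(A_{F,r})^*$ closure is in $\NC^1$ for every choice of $A$, $F$, $r$. The nontrivial content is the matching lower bound. Here I would invoke the Barrington theorem: since $A$ is nonsolvable, it contains (as a subgroup, or as a quotient of a submonoid) a nonsolvable group, and in particular we may extract a copy of a nonabelian simple group — the canonical choice being $S_5$ or $A_5$ — realized as a subgroup $F \subseteq A$. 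Barrington's theorem tells us that the word problem over such a group is $\NC^1$-complete under $\AC^0$ (indeed $\mathsf{DLOGTIME}$-uniform projection) reductions. The remaining task is to encode instances of that word problem as instances of the $(A_{F,r})^*$ closure problem: one chooses $r$ large enough to spell out each generator of $F$ (and its inverse) as a single short word in $A_{F,r}$, and then, given a word $g_{i_1} g_{i_2} \cdots g_{i_k}$ whose product one wants to test against the identity, one builds a string over $A^*$ by concatenating the blocks encoding each $g_{i_j}$ together with a suitable ``guard'' gadget so that the full string lies in $(A_{F,r})^*$ if and only if the product equals the target element. This reduction is computable by a projection, which establishes $\NC^1$-hardness, and combined with the upper bound gives $\NC^1$-completeness.

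The main obstacle — the step that genuinely requires the nonsolvability assumption and careful bookkeeping — is the construction of the gadget that makes the closure condition faithfully simulate the group word problem: one must ensure both that every ``yes'' instance of the word problem maps to a string genuinely decomposable into blocks from $A_{F,r}$, and, conversely, that no ``no'' instance accidentally admits some unintended decomposition into short pieces of value in $F$. Controlling these spurious factorizations is precisely why~\cite{aam03} needs $F$ to be a \emph{group} (so that the relevant structure is rigid) and why the constant $r$ must be chosen as a function of the presentation of that group; this is the delicate combinatorial core of Theorem 7.3(a). Since for our purposes the theorem is used only as an external hardness source, I would state it as above with the citation and defer the gadget construction to~\cite{aam03}.
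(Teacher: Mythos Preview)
Your proposal is correct and, in fact, goes well beyond what the paper itself does: the paper provides no proof of this theorem at all, simply stating it as a citation to~\cite{aam03} and invoking it as a black box in the subsequent hardness theorems. Your closing plan --- to state the result with the citation and defer the gadget construction to~\cite{aam03} --- is exactly the paper's treatment, and the Barrington-based sketch you supply (upper bound via regularity of $(A_{F,r})^*$, lower bound by extracting a nonsolvable subgroup $F$ and reducing its word problem to the closure problem) is a faithful outline of the argument in the cited source, though strictly more than the present paper requires.
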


\subsection{Hardness Result}\label{sec:hardness_result}
This part presents four crucial findings concerning tensor attention Transformers and $\rope$-based tensor attention Transformers.

\begin{theorem}\label{thm:tc_fixed_membership}
If $\TC^0 \neq \NC^1$, $O(1)$ layers $\rope$-based tensor attention Transformer with $d \leq O(n)$ hidden dimension, $\poly(n)$ precision is incapable of solving the fixed membership problem. 
\end{theorem}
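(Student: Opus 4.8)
The plan is to argue by contradiction, leveraging the circuit complexity upper bound established in Theorem~\ref{thm:main_result_tc0} together with the $\NC^1$-completeness of the fixed membership problem recorded in Proposition~\ref{pro:fixed_menbership_nc1complete}. Suppose, for the sake of contradiction, that there is a $\rope$-based tensor attention Transformer $\mathsf{TF}$ with $m = O(1)$ layers, hidden dimension $d \leq O(n)$, and $\poly(n)$ precision that decides the fixed membership problem. By Theorem~\ref{thm:main_result_tc0}, such a Transformer is simulatable by a $\dlogtime$-uniform $\TC^0$ circuit family: every component (matrix products, the column-wise and row-wise Kronecker products, the trigonometric functions underlying $\rope$, $\exp$, the softmax normalization, layer normalization, and the MLP) was shown in the preceding sections to lie in uniform $\TC^0$, and the composition of a constant number of such constant-depth, polynomial-size threshold circuits is again a constant-depth, polynomial-size threshold circuit.

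Next, I would wrap an input-encoding and output-decoding around this simulation. One fixes a uniform, $\TC^0$-computable embedding that maps an instance $(h, P, u, v)$ of the fixed membership problem to the input matrix $X \in \F_p^{n \times d}$ fed to $\mathsf{TF}$, together with a uniform, $\TC^0$-computable read-out that extracts the accept/reject bit from $\mathsf{TF}(X)$. Since $\TC^0$ is closed under this kind of pre- and post-composition — the encoding is essentially a formatting/projection map and the decoding reads a single coordinate against a threshold — the resulting family is still a $\dlogtime$-uniform $\TC^0$ circuit family that decides the fixed membership problem. Hence the fixed membership problem lies in $\dlogtime$-uniform $\TC^0$.

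Finally, because the fixed membership problem is $\NC^1$-complete under suitably weak (e.g. $\AC^0$ or $\NC^0$) reductions by Proposition~\ref{pro:fixed_menbership_nc1complete}, placing it inside uniform $\TC^0$ forces $\NC^1 \subseteq \TC^0$. Combined with the standard inclusion $\TC^0 \subseteq \NC^1$ from Fact~\ref{fact:folklore}, this yields $\TC^0 = \NC^1$, contradicting the hypothesis $\TC^0 \neq \NC^1$. Therefore no $O(1)$-layer $\rope$-based tensor attention Transformer with $d \leq O(n)$ and $\poly(n)$ precision can solve the fixed membership problem.

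The step I expect to require the most care is the middle one: making precise what it means for the Transformer ``to solve'' the fixed membership problem so that it genuinely translates into a $\TC^0$ decision procedure — i.e., pinning down the input encoding so that it is expressive enough to present an arbitrary instance yet simple enough (uniform, constant depth) not to smuggle in additional computational power, and verifying that the budget of $\poly(n)$ precision and hidden dimension $d \leq O(n)$ suffices to represent the instance faithfully. Once the encoding and decoding are fixed to be $\TC^0$-computable, the contradiction follows immediately from Theorem~\ref{thm:main_result_tc0}, Proposition~\ref{pro:fixed_menbership_nc1complete}, and Fact~\ref{fact:folklore}.
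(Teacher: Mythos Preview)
Your proposal is correct and follows essentially the same approach as the paper: combine the $\TC^0$ upper bound from Theorem~\ref{thm:main_result_tc0}, the $\NC^1$-completeness from Proposition~\ref{pro:fixed_menbership_nc1complete}, and the inclusion $\TC^0 \subseteq \NC^1$ from Fact~\ref{fact:folklore} to derive a contradiction. If anything, you are more careful than the paper, which simply states that the result follows from these three ingredients and does not discuss the input encoding/decoding step at all.
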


\begin{proof}
The proof follows from the combination of Theorem~\ref{thm:main_result_tc0}, which provides a circuit complexity bound for $\rope$-based tensor attention Transformers, and Proposition~\ref{pro:fixed_menbership_nc1complete}, which establishes that the fixed membership problem for recognizing morphisms over finite words is $\NC^1$-complete. Additionally, Fact~\ref{fact:folklore}, which outlines the hierarchy of circuit families, is also applied here. This completes the proof.
\end{proof}

\begin{theorem}\label{thm:tc_fixed_membership_tensor_attention}
Unless $\TC^0 = \NC^1$, it is not possible for a $O(1)$ layers tensor attention Transformer with $d \leq O(n)$ hidden dimension and $\poly(n)$ precision to address the fixed membership problem.
\end{theorem}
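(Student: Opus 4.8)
The plan is to mirror exactly the argument used for Theorem~\ref{thm:tc_fixed_membership}, but with the circuit complexity upper bound for ordinary (non-$\rope$) tensor attention Transformers in place of the $\rope$-based one. The three ingredients are: (i) the circuit complexity bound of Theorem~\ref{thm:main_result_tensor_attention_tc0}, which shows that an $O(1)$-layer tensor attention Transformer with $d \leq O(n)$, $p \leq \poly(n)$, and $\poly(n)$ size can be simulated by a $\dlogtime$-uniform $\TC^0$ circuit family; (ii) Theorem~\ref{thm:tc_fixed_membership}'s underlying source, namely Proposition~\ref{pro:fixed_menbership_nc1complete}, which states that the fixed membership problem is $\NC^1$-complete; and (iii) Fact~\ref{fact:folklore}, giving the inclusion chain $\TC^0 \subseteq \NC^1$ and the relevant containments among circuit classes.

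First I would invoke Theorem~\ref{thm:main_result_tensor_attention_tc0} to conclude that, under the stated resource constraints ($O(1)$ layers, $d \leq O(n)$ hidden dimension, $\poly(n)$ precision, and the implied $\poly(n)$ circuit size of each component $g_i$), any language decided by such a tensor attention Transformer lies in $\dlogtime$-uniform $\TC^0$. Next, suppose for contradiction that such a Transformer could solve the fixed membership problem; then that $\NC^1$-complete problem would lie in $\TC^0$. Since the fixed membership problem is $\NC^1$-hard under the appropriate (e.g. $\AC^0$ or $\mathsf{DLOGTIME}$-uniform) reductions and $\TC^0$ is closed under such reductions, this would force $\NC^1 \subseteq \TC^0$, and combined with the trivial inclusion $\TC^0 \subseteq \NC^1$ from Fact~\ref{fact:folklore}, we would get $\TC^0 = \NC^1$, contradicting the hypothesis. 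This completes the argument.

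\textbf{Main obstacle.} There is essentially no genuine obstacle here: the theorem is a direct corollary of the simulation result plus a known completeness fact, exactly parallel to Theorem~\ref{thm:tc_fixed_membership}. The only point requiring a small amount of care is verifying that the $\dlogtime$-uniformity of the $\TC^0$ simulation is compatible with the uniformity of the reductions witnessing $\NC^1$-completeness of the fixed membership problem, so that the collapse $\TC^0 = \NC^1$ actually follows rather than merely a non-uniform statement; but since Proposition~\ref{pro:fixed_menbership_nc1complete} and the surrounding framework already operate at the uniform level, this is immediate. Accordingly, the proof is a two-line deduction combining Theorem~\ref{thm:main_result_tensor_attention_tc0}, Proposition~\ref{pro:fixed_menbership_nc1complete}, and Fact~\ref{fact:folklore}.
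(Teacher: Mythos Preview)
Your proposal is correct and matches the paper's own proof essentially verbatim: the paper simply combines Theorem~\ref{thm:main_result_tensor_attention_tc0}, Proposition~\ref{pro:fixed_menbership_nc1complete}, and Fact~\ref{fact:folklore} to reach the conclusion. If anything, your write-up is more detailed than the paper's one-sentence deduction, and your remark about uniformity compatibility is a reasonable (if minor) point of care that the paper leaves implicit.
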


\begin{proof}
The result is derived by combining Theorem~\ref{thm:main_result_tensor_attention_tc0} (which provides the circuit complexity bound for tensor attention Transformers), Proposition~\ref{pro:fixed_menbership_nc1complete} (demonstrating the $\NC^1$-completeness of the fixed membership problem), and Fact~\ref{fact:folklore} (pertaining to the structure of circuit families). As a result, this proof is complete.
\end{proof}

\begin{theorem}\label{thm:tc_a_closure}
Assuming $\TC^0 \neq \NC^1$, a $O(1)$ layers tensor attention Transformer with $d \leq O(n)$ hidden dimension, and $\poly(n)$ precision is not capable of solving the $(A_{F,r})^*$ closure problem. 
\end{theorem}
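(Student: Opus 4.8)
The plan is to mirror exactly the proof strategy already used for Theorem~\ref{thm:tc_fixed_membership} and Theorem~\ref{thm:tc_fixed_membership_tensor_attention}, only now invoking the $(A_{F,r})^*$ closure hardness instead of the fixed membership hardness. Concretely, I would argue by contradiction: suppose a tensor attention Transformer with $O(1)$ layers, hidden dimension $d \leq O(n)$, and $\poly(n)$ precision could decide the $(A_{F,r})^*$ closure problem for some nonsolvable monoid $A$ with the group $F \subseteq A$ and constant $r > 0$ guaranteed by Theorem~\ref{the:a_closure_nc1complete}.

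First I would appeal to Theorem~\ref{thm:main_result_tensor_attention_tc0}, which shows that under these parameter regimes ($d \leq O(n)$, $p \leq \poly(n)$, $m \leq O(1)$) the entire tensor attention Transformer $\mathsf{TF}$ is simulatable by a $\dlogtime$-uniform $\TC^0$ circuit family. Composing the simulation with a trivial output-thresholding gate, this means the language $(A_{F,r})^*$ would lie in $\dlogtime$-uniform $\TC^0$. Next I would invoke Theorem~\ref{the:a_closure_nc1complete}, which states that for a suitable nonsolvable $A$ the $(A_{F,r})^*$ closure problem is $\NC^1$-complete, so in particular every language in $\NC^1$ reduces to it. Combining these two facts with Fact~\ref{fact:folklore} (the inclusion chain $\TC^0 \subseteq \NC^1$) and the standard closure of $\TC^0$ under $\mathsf{AC}^0$ (or $\NC^1$-reductions at the appropriate level), one concludes $\NC^1 \subseteq \TC^0$, hence $\TC^0 = \NC^1$, contradicting the hypothesis.

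The only genuinely delicate point — and the step I expect to require the most care — is making sure the reduction machinery lines up: $\NC^1$-completeness is stated with respect to a reduction notion (typically $\dlogtime$-uniform $\AC^0$ or $\NC^0$ many-one reductions), and $\TC^0$ is closed under exactly such reductions, so membership of an $\NC^1$-complete problem in $\TC^0$ does collapse the hierarchy. I would note this explicitly, citing Fact~\ref{fact:folklore} for $\TC^0 \subseteq \NC^1$ and the completeness statement for the converse direction. Everything else is bookkeeping: the parameter constraints in the theorem statement are precisely those needed to apply Theorem~\ref{thm:main_result_tensor_attention_tc0}, so no new estimates are needed. The proof is therefore essentially one paragraph, structurally identical to the earlier hardness theorems, with Proposition~\ref{pro:fixed_menbership_nc1complete} swapped for Theorem~\ref{the:a_closure_nc1complete} and Definition~\ref{def:a_closure} supplying the problem being ruled out.
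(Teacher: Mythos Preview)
Your proposal is correct and takes the same approach as the paper: combine the $\TC^0$ simulation bound with Theorem~\ref{the:a_closure_nc1complete} and Fact~\ref{fact:folklore} to derive a contradiction. The only discrepancy is that you cite Theorem~\ref{thm:main_result_tensor_attention_tc0} (matching the statement's literal ``tensor attention Transformer'') whereas the paper's own proof cites Theorem~\ref{thm:main_result_tc0} (the $\rope$-based bound); this reflects an apparent inconsistency in the paper itself, since by parallelism with Theorems~\ref{thm:tc_fixed_membership} and~\ref{thm:tc_fixed_membership_tensor_attention} the present theorem was evidently intended to cover the $\rope$-based case, and Theorem~\ref{thm:tc_a_closure_tensor_attention} the plain tensor attention case.
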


\begin{proof}
This follows directly from Theorem~\ref{thm:main_result_tc0}, which establishes the circuit complexity bound for $\rope$-based tensor attention Transformers, and Theorem~\ref{the:a_closure_nc1complete}, which asserts that the $(A_{F,r})^*$ closure problem is $\NC^1$-complete. Additionally, Fact~\ref{fact:folklore} concerning the hierarchy of circuit families is also utilized. Thus, the proof is complete.
\end{proof}

\begin{theorem}\label{thm:tc_a_closure_tensor_attention}
Unless $\TC^0 = \NC^1$, it is not possible for a tensor attention Transformer with $O(1)$ layers, $\poly(n)$-precision, and $d \leq O(n)$ hidden dimension to solve the $(A_{F,r})^*$ closure problem.
\end{theorem}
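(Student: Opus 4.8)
The plan is to mirror the argument already used for Theorem~\ref{thm:tc_fixed_membership_tensor_attention}, substituting the $(A_{F,r})^*$ closure problem for the fixed membership problem as the hard language. First I would recall Theorem~\ref{thm:main_result_tensor_attention_tc0}, which shows that any tensor attention Transformer $\mathsf{TF}$ with $O(1)$ layers, $\poly(n)$ precision, and hidden dimension $d \leq O(n)$ can be simulated by a $\dlogtime$-uniform $\TC^0$ circuit family. Hence, if such a tensor attention Transformer could decide the $(A_{F,r})^*$ closure problem, that problem would lie in $\dlogtime$-uniform $\TC^0$, and in particular in (non-uniform) $\TC^0$.

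Next I would invoke Theorem~\ref{the:a_closure_nc1complete}: for a nonsolvable monoid $A$, there is a group $F \subseteq A$ and a constant $r > 0$ such that the $(A_{F,r})^*$ closure problem is $\NC^1$-complete (under the appropriate reductions, which are at least $\AC^0$-reductions that $\TC^0$ is closed under). Combining this with the previous paragraph, $\TC^0$ would contain an $\NC^1$-complete problem; since $\TC^0 \subseteq \NC^1$ by Fact~\ref{fact:folklore} and $\TC^0$ is closed under the reductions witnessing $\NC^1$-hardness, this forces $\NC^1 \subseteq \TC^0$, hence $\TC^0 = \NC^1$. This contradicts the hypothesis $\TC^0 \neq \NC^1$, so no such tensor attention Transformer exists.

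I do not expect a genuine obstacle here: the proof is a routine chaining of the circuit-complexity upper bound (Theorem~\ref{thm:main_result_tensor_attention_tc0}), the $\NC^1$-completeness result (Theorem~\ref{the:a_closure_nc1complete}), and the inclusion hierarchy (Fact~\ref{fact:folklore}), exactly parallel to Theorems~\ref{thm:tc_fixed_membership}, \ref{thm:tc_fixed_membership_tensor_attention}, and~\ref{thm:tc_a_closure}. The only point deserving a sentence of care is making sure the reduction notion in Theorem~\ref{the:a_closure_nc1complete} is one under which $\TC^0$ is closed (constant-depth or logspace reductions both suffice), so that membership of an $\NC^1$-complete problem in $\TC^0$ really does collapse the two classes; this is standard and already implicitly used in the sibling theorems. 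I would therefore write the proof in one short paragraph citing these three ingredients.

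\begin{proof}
Suppose, for contradiction, that some tensor attention Transformer with $O(1)$ layers, $\poly(n)$ precision, and $d \leq O(n)$ hidden dimension solves the $(A_{F,r})^*$ closure problem. By Theorem~\ref{thm:main_result_tensor_attention_tc0}, this Transformer is simulatable by a $\dlogtime$-uniform $\TC^0$ circuit family, so the $(A_{F,r})^*$ closure problem lies in $\TC^0$. By Theorem~\ref{the:a_closure_nc1complete}, for a suitable nonsolvable monoid $A$, group $F$, and constant $r$, this problem is $\NC^1$-complete, and $\TC^0$ is closed under the reductions witnessing this hardness. Hence $\NC^1 \subseteq \TC^0$, and combined with $\TC^0 \subseteq \NC^1$ from Fact~\ref{fact:folklore} we obtain $\TC^0 = \NC^1$, contradicting the assumption. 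This completes the proof.
\end{proof}
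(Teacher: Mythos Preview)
Your proposal is correct and follows essentially the same approach as the paper's own proof: both combine Theorem~\ref{thm:main_result_tensor_attention_tc0} (the $\TC^0$ upper bound for tensor attention Transformers), Theorem~\ref{the:a_closure_nc1complete} (the $\NC^1$-completeness of the $(A_{F,r})^*$ closure problem), and Fact~\ref{fact:folklore} (the circuit hierarchy) to reach the conclusion. Your version spells out the contradiction and the closure-under-reductions point more explicitly, but the underlying argument is the same.
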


\begin{proof}
Theorem~\ref{thm:main_result_tensor_attention_tc0} provides this result upon application, which provides the circuit complexity bound for tensor attention Transformers, Theorem~\ref{the:a_closure_nc1complete}, which proves the $\NC^1$-completeness of the $(A_{F,r})^*$ closure problem, and Fact~\ref{fact:folklore}, which discusses the hierarchy of circuit families. Therefore, the proof is concluded.
\end{proof}

\section{Conclusion}\label{sec:conclusion}

This paper presents a comprehensive theoretical investigation into tensor attention Transformers and their $\rope$-based variants, specifically addressing the computational implications of their attention mechanisms. A systematic analysis was performed, examining the circuit complexity of the various components, including trigonometric functions, tensor operations, and the full attention mechanisms. It was established that these models are simulable by uniform $\mathsf{TC}^0$ circuits. Moreover, under the assumption that $\mathsf{TC}^0 \neq \mathsf{NC}^1$, it was shown that $O(1)$ layers tensor attention Transformers and $\rope$-based tensor attention Transformers with $\mathrm{poly(n)}$ precision, and $d \leq O(n)$ hidden dimensions are unsuccessful in solving fixed membership problems or $(A_{F,r})^*$ closure problems.
This conclusion is of particular significance, as it exposes the fundamental limits in expressivity inherent to tensor attention and $\rope$-based tensor attention mechanism. Notwithstanding their broad empirical success in many state-of-the-art models, these mechanisms are fundamentally constrained in terms of their computational capabilities. The results emphasize the trade-off between computational efficiency and expressive power, suggesting that certain complex tasks remain out of reach for these models under the current assumptions.

It is important to note that our analysis is primarily confined to forward computations and assumes constant-depth nonlinear activation functions. This leaves the impact of training dynamics, alternative activation functions, and different formulations of tensor attention unexplored. Future work could expand on this analysis to investigate alternative positional encoding schemes, more advanced attention models, or different activation functions, potentially uncovering whether these complexity boundaries hold for other Transformer variants.
Ultimately, the findings presented here reveal a fascinating discrepancy between tensor attention models' theoretical limitations and their empirical performance. Understanding how these models achieve practical effectiveness despite their theoretical shortcomings could stimulate the development of new, theoretically robust design principles. Such insights are essential for advancing neural network architectures that maintain a balance between rigorous theoretical foundations and empirical success, fostering the creation of more scalable and powerful models.

\ifdefined\isarxiv
\bibliographystyle{alpha}
\bibliography{ref}
\else
\bibliography{ref}
\bibliographystyle{alpha}

\fi

\newpage
\onecolumn
\appendix




\end{document}